\numberwithin{equation}{section} 
\newtheorem{thm}{Theorem}[section]
\newtheorem{theorem}[thm]{Theorem} 
\newtheorem{lemma}[thm]{Lemma}
\algrenewcommand\algorithmicrequire{\textbf{Input:}}
\algrenewcommand\algorithmicensure{\textbf{Output:}}
\providecommand\given{}
\newcommand\SetSymbol[1][]{%
\nonscript\:#1\vert
\allowbreak
\nonscript\:
\mathopen{}}
\DeclarePairedDelimiterX\set[1]\{\}{%
\renewcommand\given{\SetSymbol[\delimsize]}#1}
\newcommand\E{\mathbb{E}} 
\newcommand\R{\mathbb{R}}
\newcommand\N{\mathbbm{N}}
\DeclareMathOperator*{\amin}{arg\,min}
\DeclareMathOperator*{\argmax}{arg\,max}
\DeclareMathOperator*{\argmin}{arg\,min}
\DeclarePairedDelimiter\norm{\lVert}{\rVert}
\DeclarePairedDelimiter\abs{\lvert}{\rvert}
\DeclarePairedDelimiter\lrp{(}{)}
\DeclarePairedDelimiter\brk{\langle}{\rangle}
\DeclarePairedDelimiterX\ip[2]{\langle}{\rangle}{#1,#2}
\renewcommand*{\P}{\mathbb{P}}
\DeclareMathOperator{\rk}{rk}
\DeclareMathOperator{\Tr}{tr}
\DeclareMathOperator{\im}{im}
\DeclareMathOperator\St{St}
\def\ddefloop#1{\ifx\ddefloop#1\else\ddef{#1}\expandafter\ddefloop\fi}
\def\ddef#1{\expandafter\def\csname #1scr\endcsname{\ensuremath{\mathcal{#1}}}}
\def\ddef#1{\expandafter\def\csname #1cal\endcsname{\ensuremath{\mathscr{#1}}}}
\def\ddef#1{\expandafter\def\csname #1hat\endcsname{\ensuremath{\widehat{#1}}}}
\def\ddef#1{\expandafter\def\csname #1bar\endcsname{\ensuremath{\bar{#1}}}}
\def\ddef#1{\expandafter\def\csname #1bar\endcsname{\ensuremath{\bar{#1}}}}
\def\ddef#1{\expandafter\def\csname #1bb\endcsname{\ensuremath{\mathbb{#1}}}}
\def\ddef#1{\expandafter\def\csname #1hat\endcsname{\ensuremath{\widehat{\csname #1\endcsname}}}}
\newcommand{\by}{\times} 
\DeclarePairedDelimiterXPP\Exp[1]{\E}{[}{]}{}{\renewcommand\given{\SetSymbol[\delimsize]}#1}
\let\Pr\relax
\DeclarePairedDelimiterXPP\Pr[1]{\P}{[}{]}{}{\renewcommand\given{\SetSymbol[\delimsize]}#1}
\DeclarePairedDelimiterXPP\ind[1]{\mathbf{1}}{\{}{\}}{}{#1}
\DeclarePairedDelimiterXPP{\nucnorm}[1]{}\lVert\rVert{_\mathrm{nuc}}{#1} 
\DeclarePairedDelimiterXPP{\opnorm}[1]{}\lVert\rVert{_\mathrm{2}}{#1} 
\DeclarePairedDelimiterXPP{\Fnorm}[1]{}\lVert\rVert{_\mathrm{F}}{#1} 
\DeclarePairedDelimiterXPP{\enorm}[1]{}\lVert\rVert{_2}{#1}
\DeclareMathOperator{\rowspan}{rowspan}
\DeclareMathOperator{\spn}{span} 
\newcommand{\xpln}[1]{, \ \text{#1}}
\newcommand{\pseudo}[1]{ {#1}^\dagger}
\DeclareMathOperator{\Gr}{Gr}
\newcommand{\equalsd}{\overset{d}{=}} 
\newcommand\pX{\pseudo{X}}
\newcommand\gtruth{\beta^{(0)}} 
\newcommand\iest{\beta^{(1)}} 
\newcommand\test{\bar\beta^{(1)}} 
\newcommand\fest{\beta^{(2)}} 
\newcommand{\est}[1]{\beta^{(#1)}} 
\newcommand{\truncest}[1]{\bar{\beta}^{(#1)}}
\newcommand\gtruthm{\Theta^{(0)}}
\newcommand\iestm{\Theta^{(1)}}
\newcommand\T{\intercal} 
\newcommand{\eqdef}{\coloneqq}
\DeclareMathOperator{\sg}{\textsf{subG}}
\title{\LARGE \bf
Estimation of Models with Limited Data\\by Leveraging Shared Structure
}
\author{Maryann Rui \and Thibaut Horel \and  Munther Dahleh%
\thanks{Laboratory for Information \& Decision Systems, Massachusetts Institute of Technology.
{\tt\small \{mrui,thibauth,dahleh\}@mit.edu}}%
}
\begin{document}

\maketitle

\begin{abstract}
Modern data sets, such as those in healthcare and e-commerce, are often derived from many individuals or systems but have insufficient data from each source alone to separately estimate individual, often high-dimensional, model parameters. If there is shared structure among systems however, it may be possible to leverage data from other systems to help estimate individual parameters, which could otherwise be non-identifiable. 
In this paper, we assume systems share a latent low-dimensional parameter space and propose a method for recovering $d$-dimensional parameters for $N$ different linear systems, even when there are only $T<d$ observations per system. To do so, we develop a three-step algorithm which estimates the low-dimensional subspace spanned by the systems' parameters and produces refined parameter estimates within the subspace. 
We provide finite sample subspace estimation error guarantees for our proposed method. Finally, we experimentally validate our method on simulations with i.i.d.\ regression data and as well as correlated time series data.
\end{abstract}

\section{Introduction}
In a variety of fields such as healthcare and e-commerce, it is often desirable to estimate parameters or provide recommendations for individuals based on data. Consider the common situation where we have $N$ different individuals, each with $T$ observations collected in $(X_i, Y_i) \in \R^{T\by d}\by \R^T$. Assume the data is generated as 
\begin{align}\label{eq:initmodel}
Y_i = X_i \gtruth_i + w_i \xpln{$i \in [N]$}
\end{align}
where $w_i \in \R^T$ is some independent noise vector, and $\gtruth_i$ are parameters of interest.

Such a linear model is ubiquitous in statistics, and standard least squares
regression provides an estimate of $\gtruth_i$ based on $(X_i, Y_i)$ when $T\geq d$ and $X_i$ is well-conditioned. 

Realistically, however, while a data set may contain many individuals, the data available from each individual may be limited, especially compared to the dimension of the parameter space considered. For instance, in the healthcare setting, patient data may be fragmented and stored on different electronic health record systems, so that each record system may have many individuals but imcomplete data from each \cite{dewdney2017electronic}. This may lead to problems of non-identifiability for individual systems, as in the case where we only have $T<d$ observations of a $d$-dimensional linear model. 

If there is a shared structure among individuals, however, it may be possible to leverage information from other individuals who share similar characteristics to overcome the challenge of non-identifiability of individual parameters. 

In this paper, we examine this possibility and propose a method of estimating each system's $d$-dimensional parameter by exploiting data from other systems along with the assumption that the parameters lie in a common $r$-dimensional subspace, where $r<d$. The questions we wish to answer are: can a sufficiently large number $N$ of systems compensate for a small amount $T$ of data per system in the task of estimating all the parameters? If so, how does the sample complexity scale in the parameters $N, T, r$ and $d$ of the problem?
 
If we simply count the degrees of freedom of the model, we have $r(d-r) + Nr$ parameters to estimate (the common $r$-dimensional subspace of parameters plus individual factor loadings or coefficients). Intuitively, one may expect that $NT \geq r(d-r) + Nr$ parameters are needed to jointly identify all parameters of the system. It is not obvious how to rigorously justify this intuition, nor how to develop and implement an estimation algorithm for this setting. 

To begin to tackle this complex and broad-ranging question, we propose an estimation method based on three separate least squares optimizations. 
The method first computes initial estimates of each system's parameter vector, which may be significantly far from the true value, but which still contain information about the common underlying subspace spanned by the true system parameters. Next, an estimate of this low dimensional subspace is obtained by extracting the top $r$ principal subspace of the first step estimates. From this, we can obtain a refined estimate of each system's individual parameters by solving another least squares problem, this time constrained to be over the estimated subspace. This last step requires $T\geq r$ for parameter identifiability, which can be a considerably easier condition to satisfy than the naive requirement of $T\geq d$, as $d \gg r$ in many real world datasets.  

 We provide finite sample subspace estimation error guarantees for a variant of our proposed method that takes into account the possible ill-conditioning of the pseudo-inverse-based least squares solution which arises when $T \approx d$. The analysis relies on obtaining concentration bounds for the sample covariance of the first-step estimates, and then proving that subspace estimation on these first-step estimates will obtain the true underlying subspace in expectation. 
 
Finally, we demonstrate our method and variants on simulations with i.i.d.\ regression data. We also evaluate our method on time series data with correlated regressors, and find that the method is flexible enough to handle this scenario. These results suggest the applicability of the three-step estimation method for more general settings of estimation of related with a common low rank structure. 

\subsection{Related Work}

\paragraph{Mixtures of linear regressions} The problem of estimating parameters
from limited observations of different systems that share a common low
dimensional structure is related to the problem of mixtures of linear
regressions \cite{faria2010fitting, li2018learning} and multitask, or
meta-learning \cite{kong2020robust, kong2020meta}. The main difference to our
setting is the systems' parameters are assumed to be clustered, rather than
coming from a low dimensional subspace.

\paragraph{Low rank matrix regression} Furthermore, a large body of work studies the related problem of low rank matrix regression, which usually uses a least squares estimator with nuclear norm regularization to estimate a low rank matrix. 
We can re-express \eqref{eq:initmodel} to make the comparison with matrix regression explicit. Let $\gtruthm:= \big[\gtruth_1 \cdots \gtruth_N\big]$ be the $d \by N$ matrix whose columns are the system parameters. Then by assumption, $\rk\gtruthm = r$. However, the data generating process for observations $(X_i, y_i)$ is
\begin{align}
y_i = X_i \gtruthm e_i + w_i \xpln{$i \in [N]$}. 
\end{align}
where $e_i$ is the $i$th coordinate vector in $\R^N$. While we are still trying to estimate a low rank matrix $\gtruthm$, the dimension of this matrix grows with the number of observations—unlike in matrix regression where it is assumed constant—landing us in a different regime for analysis and optimization.

\paragraph{Dictionary learning.} Finally, the problem of dictionary learning, also known as sparse coding, and matrix factorization, shares similar structure to the problem considered in this paper \cite{gr1, gr2, mairal2009online}. However, we only observe system parameters through the lens of a design matrix $X$ whose rows do not fully span the parameter space. Even if our design matrix $X \in \R^{T\by d}$ were the identity (so $T=d$), though, we also do not impose a sparsity assumption on the dictionary coefficients, as is standard in the dictionary learning literature. The differences are further detailed in Section \ref{sec:method1}.

\paragraph{Meta-learning and transfer learning.}
After the initial submission of this paper, we became aware of recent related work on this problem. In \cite{tripu2021provable}, the authors present a method of moments (MoM) estimator, and a similar estimator is studied under more general assumptions in \cite{duchi2022subspace}. We discuss the relationship between these estimators and ours and provide an empirical comparison in \cref{sec:comparison}.

More generally our work connects with the broader literature in machine
learning that studies learning across related tasks or data sets, usually
referred to by the umbrella terms meta-learning and transfer learning
\cite{pontil2015benefit,du2020few}.

\section{Preliminaries}

\subsection{General notations}

We define for $N\in\mathbb{N}^*$ the set $[N]:=\set{1, 2,\dots, N}$. The
inequality $a\lesssim b$ means that there exists a universal constant $C$
such that $a\leq Cb$.

For vectors $x,y\in\R^d$, $\ip x y \eqdef x^\T y=\sum_{i=1}^d x_i y_i$ and
$\norm{x}_2\eqdef \sqrt{\ip x x}$ denote the Euclidean inner product and norm,
respectively.

For a matrix $A$, $\Tr A$, $A^\T$ and  $\pseudo{A}$ denote its trace,
transpose and Moore–Penrose pseudoinverse, respectively. The identity matrix in
$\R^{d\by d}$ is written $I_d$. For matrices $A, B \in \R^{d\by r}$, $\brk{A,
B} := \Tr(A^\T B)$ denotes the Frobenius or trace inner product, 
$\Fnorm{A} = \sqrt{\Tr(A^\T A)}$ is the Frobenius norm of $A$, and $\opnorm{A}$
is its spectral norm.
Let $\Oscr(d)\eqdef\set{Q \in \R^{d \by d} \given Q^\T Q = QQ^\T = I_d}$ denote
the orthogonal group on $\R^d$ and $\St(r, d)\eqdef \set{A \in \R^{d\by r} \given A^\T A = I_r}$ denote the Stiefel manifold of orthonormal $r$-frames in $\R^d$.

\subsection{Subspaces}

For $r\leq d$, $\Gr(r, d)$ denotes the Grassmanian manifold of $r$-dimensional subspaces of $\R^d$ and we write $P_\Ascr$ for the orthogonal projection onto a subspace $\Ascr\in\Gr(r,d)$. For $Q$ an orthogonal projection of rank $r$ and $\Ascr\in\Gr(r,d)$, the identities $Q=P_{\im Q}$ and $\Ascr = \im P_\Ascr$ show that the map $\Ascr\mapsto P_\Ascr$ is a bijection from $\Gr(r,d)$ to the set of orthogonal projections of rank $r$. This allows us to identify the two sets
\cite[Sec.~1.3.2]{Chikuse2003}:
\begin{align}
	\Gr(r,d) \cong \set{P\in\R^{d\by d}\given P=P^\T=P^2\land \Tr P = r}.
\end{align}
Note that the choice of an orthonormal basis of a subspace $\Ascr\in\Gr(r,d)$
gives a representation of $\Ascr$ by an element $A\in\St(r,d)$, although the
representation is non-unique. For such a matrix $A$ we have $P_\Ascr=AA^\T$.

For any two subspaces $\Ascr, \Bscr \in \Gr(r,d)$, we can find $r$ pairs of
\emph{principal vectors} $(a_i, b_i) \in \Ascr \by \Bscr$ for $i \in [r]$, and
\emph{principal  angles} $(\theta_1, \dots, \theta_r) \in [0, \pi/2]^r$ such
that $\ip{a_i}{b_i} = \cos(\theta_i)$, $\ip{a_i}{a_j} = 0$, and $\ip{b_i}{b_j}
= 0$, for $i, j \in [r], i \neq j$. $\theta_1$ is the smallest angle between
any vector in $\Ascr$ and any vector in $\Bscr$, which is achieved by $a_1$ and
$b_1$. The remaining principal vectors and angles are defined inductively, by
restricting at each step to the orthogonal complement of the span of the
previous vectors \cite{golub2013matrix, ye2016schubert}. We write
$\Theta(\Ascr,\Bscr)$ for the diagonal matrix whose
diagonal entries are the principal angles. It is possible to show that the
nonzero eigenvalues of $P_\Ascr-P_\Bscr$ are the sines of the nonzero principal
angles between $\Ascr$ and $\Bscr$, each counted twice \cite[Sec.~VII.1]{bhatia}. This implies that
$\opnorm{P_\Ascr-P_\Bscr}=\opnorm{\sin\Theta(\Ascr,\Bscr)}$ and 
$\Fnorm{P_\Ascr-P_\Bscr}=\sqrt 2\Fnorm{\sin\Theta(\Ascr,\Bscr)}$.

\subsection{Random variables}

Unless otherwise specified, all random variables are defined on the same probability space. We write $X \equalsd Y$ for identically distributed variables $X$ and $Y$. We say that a random vector $X\in\R^d$ is sub-Gaussian with variance proxy $\sigma^2$, and write $X\in\sg_d(\sigma^2)$, if for all $\alpha\in\R^d$
\begin{equation}
	\E[\exp\ip\alpha X]\leq \exp\left(\frac{\sigma^2\enorm\alpha^2}{2}\right).
\end{equation}

\section{Model}\label{sec:model}

We consider $N$ linear systems of dimension $d \in \N$, from each of which we have $T \in \N$
observations.
Specifically, each system $i\in[N]$ has observations $(X_i, Y_i) \in \R^{T\by
d}\by \R^T$ generated according to 
\begin{equation}\label{eq:model}
Y_i = X_i \gtruth_i + w_i
\end{equation}
where $\gtruth_i\in\R^d$ is the parameter of system $i\in[N]$.

The central assumption of our model is that the system parameters
$(\gtruth_i)_{i\in[N]}$ lie in an $r$-dimensional subspace $\Bscr_0$ of
$\R^d$, where $1 \leq r \leq d$. An orthonormal basis $B_0\in\St(r,d)$ of $\Bscr_0$ constitutes a common dictionary
of $r$ atoms shared by all systems and we can write for each $i\in[N]$
\begin{equation}\label{eq:model-2}
	\gtruth_i = B_0\phi_i
\end{equation}
for some $\phi_i\in\R^r$.
We are primarily interested in estimating the $r$-dimensional
subspace $\Bscr_0$, from which we can easily recover $\phi_i$,
and then $\gtruth_i$ for $i \in [N]$, from the data.

We now describe our distributional assumptions. All the random variables $(X_i)_{i\in[N]}$, $(\phi_i)_{i\in[N]}$ and $(w_i)_{i\in[N]}$ are mutually independent. The matrices $(X_i)_{i\in[N}$ are identically distributed, each having i.i.d.\ standard normal entries. Coefficients $\phi_i \in \R^r$ and noise $w_i \in \R^T$ are centered isotropic sub-Gaussian random vectors in $\sg(\sigma_\phi^2)$ and $\sg(\sigma_w^2)$, respectively, with covariance matrices $\sigma_\phi^2 I_r$ and $\sigma_w^2 I_T$, respectively.

\section{Three-Step Estimator}\label{sec:method}

We propose an estimation method that follows a general three step approach.
First, compute initial estimates of each system $i$'s parameter. Next, find the
$r$-dimensional subspace that best explains the initial estimates. In the last
step, the individual system estimates are refined by leveraging the subspace
learned in the second step.

Due to the linear structure of our observation model, instantiating the above
approach naturally results in formulating and solving a linear least squares
problem at each of the three steps. The following subsections describe these
least squares problems in more details and \cref{alg:norm} summarizes the
computation of their solution.

\begin{algorithm}[H]
\caption{Three-step parameter and subspace estimator}\label{alg:norm}
\begin{algorithmic}[1]
\Require Samples $(X_i, Y_i) \in \R^{T \by d} \by \R^T$, for $i \in [N]$; model rank $r$.
\Ensure Estimated subspace frame $\Bhat \in \St(r,d)$, coefficients $\set{\est2 \in \R^d: i \in [N]}$.

\For{$i \in [N]$}
	\State $\est1_i \gets \pX_i Y_i$ 		\Comment{First-step estimate}
	\State $\truncest1_i \gets \est1_i/\enorm{\est1_i}$
	\Comment{Normalization}\label{line:norm}
\EndFor

\State $USV^\T \gets $ SVD$([\truncest1_1 \cdots \truncest1_N])$
\State $\Bhat \gets U[:, 1:r]$ 
\Comment{Subspace estimation}
\For{$i \in [N]$}
	\State $\est2_i \gets \Bhat \pseudo{(X_i\Bhat)} Y_i$ \Comment{Coefficient estimation}
\EndFor
\end{algorithmic}
\end{algorithm}

\subsection{Initial individual estimates}\label{sec:method1}
We first obtain a least squares estimate $\iest_i$ of $\gtruth_i$ up to the
null space of $X_i$, which is nontrivial as we focus on the regime $T<d$. Let $\pX_i$ be the pseudoinverse of $X_i$. Then our initial estimates are
\begin{align}\label{eq:firststep}
\iest_i = \pX_i Y_i \xpln{$i \in [N]$}.
\end{align}
To gain a better understanding of our initial estimates $\iest_i$, under our
observation model \eqref{eq:model}, we can write
\begin{align}
	\iest_i = \pX_i X_i\gtruth_i + \pseudo{X_i} w_i
	= P_{X_i}\gtruth_i + \pseudo{X_i} w_i
\end{align}
where $P_{X_i}\eqdef \pseudo X_i X_i$ is the $d\by d$ projection matrix onto the
$T$-dimensional row space of $X_i$. Since the distribution of the rows of $X_i$
is rotationally invariant, one can check that $\E[P_{X_i}] = \frac T d I_d$
(see \cref{lemma:uniform-g,lem:1}).
Hence, up to a rescaling by $\frac d T$, $\iest_i$ is an unbiased estimate of
$\gtruth_i$. However, the noise of this estimate
\begin{equation}\label{eq:beta0}
	\eta_i\eqdef\iest_i-\gtruth_i = -(I_d-P_{X_i})\gtruth_i + \pseudo X_i w_i
\end{equation}
is not independent of the true parameter $\gtruth_i$ as it includes its
projection onto the null space of $X_i$ that was left unobserved.

\paragraph{Normalization.} The normalization step on line~\eqref{line:norm}
ensures that the first-step estimates are all weighted equally in the subspace
estimation step. As will become clear in the simulations
(\cref{sec:simulation}) this mitigates issues arising due to the pseudo-inverse
$\pseudo X_i$ being ill-conditioned when $T$ is close to $d$. For tractability
reasons, our theoretical analysis studies a variant of \cref{alg:norm} in which
the normalization step is replaced with a truncation $\truncest1_i \gets
\est1_i\ind{\opnorm{\pX_i} \leq s}$, for some predefined threshold $s$. We also
compare this variant to our main estimator in \cref{sec:simulation}.

\paragraph{Comparison with dictionary learning.} 
Using \eqref{eq:beta0} wen can write $\iest_i=\gtruth_i+\eta_i$. Hence, our setting is reminiscent of the dictionary learning, or sparse coding, problem, in which we have $N$ noisy observations where $\gtruth_i = B_0 \phi_i$, for some fixed $d \by r$ matrix $B_0$, and some vector $\phi_i \in \R^r$. Both $B_0$ and $\phi_i$ are unknown and the goal is to learn $B_0$, which in our setting then allows for a straightforward estimation of $\phi_i$ and $\gtruth_i$.

However, we cannot apply dictionary learning methods straight off the shelf. First, dictionary learning models assume sparsity of the unknown coefficients $\phi_i$, and for most sample complexity results, a degree of sparsity is necessary (i.e., the size of the support of $\phi_i$ is upper bounded) \cite{gr1, arora2014new, gribonval2010dictionary}. Meanwhile, we make no restrictive assumptions on the factor loadings $\phi_i$. 
Second, the additive noise $\eta_i$ in dictionary learning is assumed to be independent of other randomness in the problem, either with standard subgaussian or bounded distributional assumptions  \cite{gr1, arora2014new, gr2}. However, as already mentioned, $\eta_i$ in our method is visibly not independent of the parameters $\gtruth_i$ to be estimated, as it contains the component of $\gtruth_i$ in the null space of $X_i$. 

\subsection{Subspace recovery}

The goal of the second step is to compute an estimate of the $r$-dimensional subspace of $\R^d$ containing the ground truth parameters $(\gtruth_i)_{i\in[N]}$. We do so by finding the $r$-dimensional subspace $\hat{\Bscr}$ that best approximates the (normalized) first step estimates $(\test_i)_{i\in[N]}$, in the least squares sense. If $P$ denotes the orthogonal projection onto this optimal subspace, the residual error associated with $\iest_i$ is its distance to the subspace, that is $\norm[\big]{\test_i-P\test_i}_2$. Consequently, the least squares problem at this step is
\begin{align}\label{eq:2nd-step}
\min_{P\in\Gr(r, d)}\frac 1 N\sum_{i\in[N]}\norm[\big]{\test_i-P\test_i}_2^2.
\end{align}
Note that \eqref{eq:2nd-step} is exactly the problem of finding the space
spanned by the first $r$ \emph{principal components} of the first step
estimates $(\test_i)_{i\in[N]}$. Those are given by the top $r$ left-singular
vectors of the matrix $\iestm\in\R^{d\times N}$ whose columns are
$(\test_i)_{i\in[N]}$.

Another interpretation of this subspace estimate can be obtained by observing
that:
\begin{align}
\sum_{i\in[N]}\enorm[\big]{\test_i-P\test_i}^2 &=\Fnorm{\iestm-P\iestm}^2\\[-1em]
&=\ip[\big]{(I_d-P)\iestm}{(I_d-P)\iestm}\\
&=\ip*{I_d-P}{\iestm{\iestm}^\T},
\end{align}
where the last equality uses that $I_d-P$ is also an orthogonal projection.
This allows us to rewrite \eqref{eq:2nd-step}
\begin{align}
	\argmin_{P\in\Gr(r, d)}\frac 1
N\sum_{i\in[N]}\enorm[\big]{\test_i-P\test_i}^2
	&=\argmax_{P\in\Gr(r,d)} \ip*{P}{\frac{\iestm{\iestm}^\T}N}.
\end{align}
The matrix $\frac 1 N\iestm{\iestm}^\T=\frac 1 N\sum_{i\in[N]}\test_i{\test_i}^\T$ appearing on the last line is the sample covariance matrix of the first step estimates. Because this matrix is positive semi-definite, it admits a spectral decomposition with non-negative eigenvalues and orthogonal eigenspaces. The top $r$ left-singular vectors of $\iestm$ are equivalently given by an orthonormal collection of eigenvectors associated with the top $r$ eigenvalues of $\iestm{\iestm}^\T$ (counted with multiplicity).

\subsection{Parameter recovery}
In the third stage of our algorithm we obtain revised estimates of $\fest_i$ for each $i \in [N]$ given the estimate $\hat{\Bscr}$ with orthogonal frame matrix $\Bhat \in \St(r,d)$:
\begin{align}\label{eq:refinedbeta}
\fest_i &\in \amin_{\beta \in \hat{\Bscr}} \enorm{y_i - X_i \beta}^2 \\
&= \Bhat \pseudo{(X_i \Bhat)}y_i = P_{\Bhat} \pseudo{(X_i P_{\Bhat})} y_i.
\end{align}
One can obtain this result by solving for $\fest_i = \Bhat \phihat_i$ where 
\begin{align}
    \phihat_i \in \amin_{\phi \in \R^r} \enorm{y_i - X_i \Bhat \phi}^2.
\end{align}

\section{Results}\label{sec:results}

\subsection{Sample complexity}

Our main theoretical result is an upper-bound on the sample complexity of the estimator described in \cref{alg:norm}. As already mentioned, it is easier to analyze a variant in which line~\eqref{line:norm} performs a truncation instead of a normalization. Thus, for the remainder of this section $\test_i$ is defined as $\test_i=\iest_i\ind{\opnorm{\pX_i} \leq s}$, for some predefined threshold $s$. Equivalently, \cref{alg:norm} simply drops the first step estimates $\iest_i$ for which $\opnorm{\pX_i}> s$, and uses the remaining ones in the subspace estimation step.

\begin{theorem}\label{prop:sample-complexity}
	Let $\hat\Bscr$ be the subspace spanned by the columns of the output $\Bhat$ in \cref{alg:norm} with threshold level $s=\Omega\big(1/(\sqrt d -\sqrt{T-1})\big)$ if $T\leq d$, and $s =\Omega\big(1/(\sqrt T -\sqrt d)\big)$ if $T>d$. Then for each $0<\delta<1$, with probability at least $1-\delta$
	\begin{align}
		\opnorm{\sin \Theta(\hat\Bscr,\Bscr_0)}
		\lesssim \begin{cases}
\left(1+\frac{\sigma_w^2/\sigma_\phi^2}{(\sqrt d - \sqrt
		{T-1})^2}\right)
	\frac{d^2}{T^2}
		\sqrt{\frac d N\log\frac 2\delta}\ &\text{if $T\leq d$} \\
 \left(1+\frac{\sigma_w^2/\sigma_\phi^2}{(\sqrt T - \sqrt d)^2}\right)
		\sqrt{\frac d N\log\frac 2\delta}\ &\text{if $T > d$.}
		\end{cases}
	\end{align}
\end{theorem}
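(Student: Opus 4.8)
The plan is to reduce the theorem to an eigenspace perturbation statement for the truncated sample second-moment matrix $\widehat\Sigma \eqdef \tfrac1N\iestm{\iestm}^\T=\tfrac1N\sum_{i\in[N]}\test_i{\test_i}^\T$, whose top $r$ eigenspace is exactly the output subspace $\hat\Bscr$. Writing $\Sigma \eqdef \Exp{\test_i{\test_i}^\T}$ and $\gamma\eqdef\lambda_r(\Sigma)-\lambda_{r+1}(\Sigma)$ for its spectral gap at level $r$, the Davis--Kahan $\sin\Theta$ theorem gives $\opnorm{\sin\Theta(\hat\Bscr,\Bscr_0)}\lesssim\opnorm{\widehat\Sigma-\Sigma}/\gamma$ whenever $\opnorm{\widehat\Sigma-\Sigma}\le\gamma/2$ (when this fails the asserted bound exceeds a constant and there is nothing to prove). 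So it suffices to (i) show the top $r$ eigenspace of $\Sigma$ equals $\Bscr_0$ and lower bound $\gamma$, together with an upper bound on $\opnorm\Sigma$; and (ii) prove $\opnorm{\widehat\Sigma-\Sigma}\lesssim\opnorm\Sigma\sqrt{\tfrac dN\log\tfrac2\delta}$ with probability $\ge1-\delta$.

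For (i), condition on $X_i$. On the truncation event $E\eqdef\{\opnorm{\pX_i}\le s\}$ we have $\test_i=P_{X_i}\gtruth_i+\pX_i w_i$ by \eqref{eq:beta0}; using $\gtruth_i=B_0\phi_i$, the mutual independence of $\phi_i,w_i,X_i$, and $\Exp{\phi_i{\phi_i}^\T}=\sigma_\phi^2 I_r$, $\Exp{w_i{w_i}^\T}=\sigma_w^2 I_T$ yields $\Exp{\test_i{\test_i}^\T\mid X_i}=\sigma_\phi^2 P_{X_i}P_{\Bscr_0}P_{X_i}+\sigma_w^2\pX_i{(\pX_i)}^\T$ on $E$ (and $\test_i=0$ off $E$; when $T>d$ one has $P_{X_i}=I_d$, which shortens the argument). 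Since the row space of $X_i$ is a uniformly random $T$-plane while $E$ and the spectrum of $\pX_i{(\pX_i)}^\T$ depend only on the singular values of $X_i$, rotational invariance forces $\Exp{\ind E\, P_{X_i}P_{\Bscr_0}P_{X_i}}=\P(E)\bigl(aP_{\Bscr_0}+b(I_d-P_{\Bscr_0})\bigr)$ and $\Exp{\ind E\,\pX_i{(\pX_i)}^\T}=\tilde c\,I_d$ for scalars $a,b,\tilde c$ obtained by taking traces against $I_d$ and $P_{\Bscr_0}$ (this uses both $\Exp{P_{X_i}}=\tfrac Td I_d$ and the \emph{second} moment $\Exp{\Tr\bigl((P_{X_i}P_{\Bscr_0})^2\bigr)}$). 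A direct computation gives $a-b=\tfrac{T\,(d(T+1)-2)}{d(d-1)(d+2)}\gtrsim T^2/d^2$ for $T\le d$ (and $a=1,b=0$ for $T>d$), so $\Sigma=\sigma_\phi^2\P(E)\bigl(aP_{\Bscr_0}+b(I_d-P_{\Bscr_0})\bigr)+\sigma_w^2\tilde c\,I_d$ has top $r$ eigenspace $\Bscr_0$ with gap $\gamma=\sigma_\phi^2\P(E)(a-b)$, i.e.\ $1/\gamma\lesssim d^2/(\sigma_\phi^2\P(E)T^2)$ if $T\le d$ and $1/\gamma\lesssim1/(\sigma_\phi^2\P(E))$ if $T>d$. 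The threshold $s=\Omega\bigl(1/(\sqrt d-\sqrt{T-1})\bigr)$ (resp.\ $\Omega(1/(\sqrt T-\sqrt d))$) is precisely what makes $\P(E)=\Theta(1)$, via the Davidson--Szarek lower tail for the least singular value of a Gaussian matrix, while on $E$ the bound $\opnorm{\pX_i}\le s$ keeps $\opnorm\Sigma\lesssim\sigma_\phi^2+\sigma_w^2 s^2\lesssim\sigma_\phi^2+\tfrac{\sigma_w^2}{(\sqrt d-\sqrt{T-1})^2}$ and the effective rank $\Tr\Sigma/\opnorm\Sigma$ of order $d$.

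For (ii), write $\widehat\Sigma-\Sigma=\tfrac1N\sum_i(Z_i-\Exp{Z_i})$ with $Z_i\eqdef\test_i{\test_i}^\T\succeq0$. Truncation, together with $\chi^2$/sub-Gaussian concentration of $\enorm{\phi_i}$ and $\enorm{w_i}$, bounds $\opnorm{Z_i}=\enorm{\test_i}^2\le(\enorm{\phi_i}+s\enorm{w_i})^2\lesssim r\sigma_\phi^2+s^2 T\sigma_w^2\lesssim d\,\opnorm\Sigma$ with overwhelming probability, and the matrix-variance statistic satisfies $\bigl\|\sum_i\Exp{(Z_i-\Exp{Z_i})^2}\bigr\|\le N\max_i\opnorm{Z_i}\,\opnorm\Sigma$; a matrix Bernstein inequality whose dimension factor is the effective rank $\Tr\Sigma/\opnorm\Sigma\lesssim d$ (applied on the overwhelming-probability event controlling $\opnorm{Z_i}$) then gives $\opnorm{\widehat\Sigma-\Sigma}\lesssim\opnorm\Sigma\sqrt{\tfrac dN\log\tfrac2\delta}$ once $N\gtrsim d\log\tfrac2\delta$. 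Substituting this together with the bounds on $\opnorm\Sigma$ and $1/\gamma$ from (i) into $\opnorm{\sin\Theta(\hat\Bscr,\Bscr_0)}\lesssim\opnorm{\widehat\Sigma-\Sigma}/\gamma$ recovers the two cases of the statement.

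I expect step (i) to be the main obstacle: the sharp $T^2/d^2$ dependence of the gap forces one to compute the \emph{exact} second moment of the projection onto a uniform $T$-plane (its mean $\tfrac Td I_d$ alone does not suffice), and — more delicately — one must exhibit a \emph{single} choice of $s$ of the stated order that simultaneously makes $\P(E)=\Theta(1)$, keeps $\opnorm\Sigma$ at the order $\sigma_\phi^2+\sigma_w^2/(\sqrt d-\sqrt{T-1})^2$, and keeps the effective rank $O(d)$; it is exactly this three-way tension, all governed by $s$ and the least singular value of $X_i$, that dictates the final form of the bound. A more routine technical point is carrying the truncation through the Bernstein step so that the unbounded sub-exponential sizes of $\enorm{\phi_i}^2$ and $\enorm{w_i}^2$ do not inflate the dimension factor, and using the tighter PSD/effective-rank argument rather than a crude conditional sub-Gaussian-norm bound (which would cost a spurious extra $d^2/T^2$).
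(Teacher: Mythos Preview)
Your high-level architecture is exactly the paper's: Davis--Kahan, then (i) compute $\Sigma$ to identify $\Bscr_0$ as its top-$r$ eigenspace and lower-bound the gap, then (ii) concentrate $\widehat\Sigma$ around $\Sigma$, and finally choose $s$ so that $\P(E)\gtrsim 1$. Your step (i) matches the paper's covariance lemma almost verbatim (your $a-b$ coincides with their spectral-gap formula), and the paper likewise fixes $s$ via a smallest-singular-value tail bound---though they invoke Rudelson--Vershynin rather than Davidson--Szarek, which matters because the latter is vacuous when $T$ is within $O(1)$ of $d$.

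Where you diverge is step (ii), and there you are working harder than necessary. The paper simply observes that, conditionally on $X_i$ and on $\{\opnorm{\pX_i}\le s\}$, the vector $\test_i=P_{X_i}B_0\phi_i+\pX_i w_i$ is sub-Gaussian in every direction with variance proxy $L^2\eqdef\sigma_\phi^2+s^2\sigma_w^2$; averaging over $X_i$ gives the same unconditional tail. A black-box sub-Gaussian covariance-concentration result (Vershynin-type) then yields $\opnorm{\widehat\Sigma-\Sigma}\lesssim L^2\sqrt{(d/N)\log(2/\delta)}$ in one line, and dividing by $\gamma\gtrsim\sigma_\phi^2 p_s T^2/d^2$ gives the theorem. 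So your worry that the ``crude conditional sub-Gaussian-norm bound'' would cost a \emph{spurious extra} $d^2/T^2$ is misplaced: the $d^2/T^2$ in the theorem already comes entirely from $1/\gamma$, and the sub-Gaussian route delivers exactly the stated bound.

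Your matrix-Bernstein alternative is not wrong in spirit, and if carried through would in principle yield a \emph{tighter} prefactor $\opnorm\Sigma/\gamma$ rather than $L^2/\gamma$. But as written it has two soft spots. First, the key inequality $\opnorm{Z_i}=\enorm{\test_i}^2\lesssim d\,\opnorm\Sigma$ is not generally true: for $T\le d$ and $\sigma_w=0$ one has $\opnorm\Sigma\asymp\sigma_\phi^2\max(T^2,rT)/d^2$, so taking $r\asymp T\asymp\sqrt d$ gives $d\,\opnorm\Sigma\asymp\sigma_\phi^2\ll\sqrt d\,\sigma_\phi^2\asymp r\sigma_\phi^2$; and for $T>d$ the correct bound on $\enorm{\pX_i w_i}^2$ is $\lesssim s^2\min(T,d)\,\sigma_w^2$, not $s^2T\sigma_w^2$, since $\pX_i$ has rank $d$. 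Second, Bernstein carries a $\log d$ inside the deviation that the sub-Gaussian covariance bound avoids. None of this matters for \emph{this} theorem, because in your final assembly you upper-bound $\opnorm\Sigma$ by $\sigma_\phi^2+\sigma_w^2 s^2=L^2$ anyway---at which point you have reproduced precisely what the paper's sub-Gaussian argument gives directly.
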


The proof of \cref{prop:sample-complexity} is provided in \cref{app:proof}. We first show that the $r$th principal subspace of the covariance matrix $\Sigma_\beta=\E[\test_i{\test_i}^\T]$ is the ground truth subspace $\Bscr_0$ and quantify its spectral gap. By the Davis–Kahan theorem, upper-bounding the principal subspace angle reduces to upper-bounding the spectral norm of $\frac 1 N \sum_{i=1}^N \test_i{\test_i}^\T - \Sigma_\beta$. This follows from a standard result on the concentration of covariance matrices of sub-Gaussian random variables.

The bound in \cref{prop:sample-complexity} scales as $1/\sqrt{N}$ as expected
when learning from independent observations. While the dimension $r$ of
$\Bscr_0$ does not explicitly appear in the bound, it would be conventional to
take $\sigma_\phi^2=1/r$, so that the norm of the ground truth parameters
$\gtruth_i$ concentrate around $1$, as is usually assumed in sample complexity
bounds. For this choice of $\sigma_\phi^2$, our error bound scales linearly in
$r$. 
The first term in the bound degrades as $T$ increases and becomes $\Omega(d)$
when $T=d$. This is due to the design matrix $X_i$ becoming
ill-conditioned for $T$ close to $d$, a phenomenon we inspect more closely
in \cref{sec:simulation} below.
When $T>d$, the error bound has a rather negligible dependence on $T$, while one would hope for a bound that decreases in $\sqrt{T}$. Obtaining tighter bounds is left for future work.

\subsection{Simulation}\label{sec:simulation}
We investigate by simulation the performance of the three-step estimator of \cref{alg:norm} that uses normalized first-step estimates, as well as the thresholding variant of \cref{alg:norm} analyzed in \cref{prop:sample-complexity}, and which uses a truncated first-step estimate for subspace estimation. 

For each trial, an $r$-dimensional subspace $\Bscr_0\subset \R^d$ is fixed and i.i.d.\ samples $(X_i, Y_i)$, $i \in [N]$
 are generated according to \eqref{eq:initmodel}, with $w_i\sim \Nscr(0, \sigma_w^2 I_T)$, $\est0_i$ drawn
 uniformly from the unit ball intersected with $\Bscr_0$ (and thus subgaussian with variance proxy $\sigma_\phi^2 = 1/r$), and $X_i$ with i.i.d.\ standard normal entries. Each plot shows the average of 30 trials and error bars indicate one standard deviation.

Figure \ref{fig:our_trendN} shows the subspace estimation error of the estimate $\Bhat$ from (1) \texttt{thresh}, \cref{alg:norm} with truncated first-step estimates, (2) \texttt{norm}, \cref{alg:norm} with normalized first step estimates. Results are shown for $d=50, r=5$, in regimes $d>T=10$, $T=d=50$, and $d<T=80$, as the number of systems $N$ is varied.

\begin{figure}[t]
\centering
\includegraphics[width=0.63\textwidth]{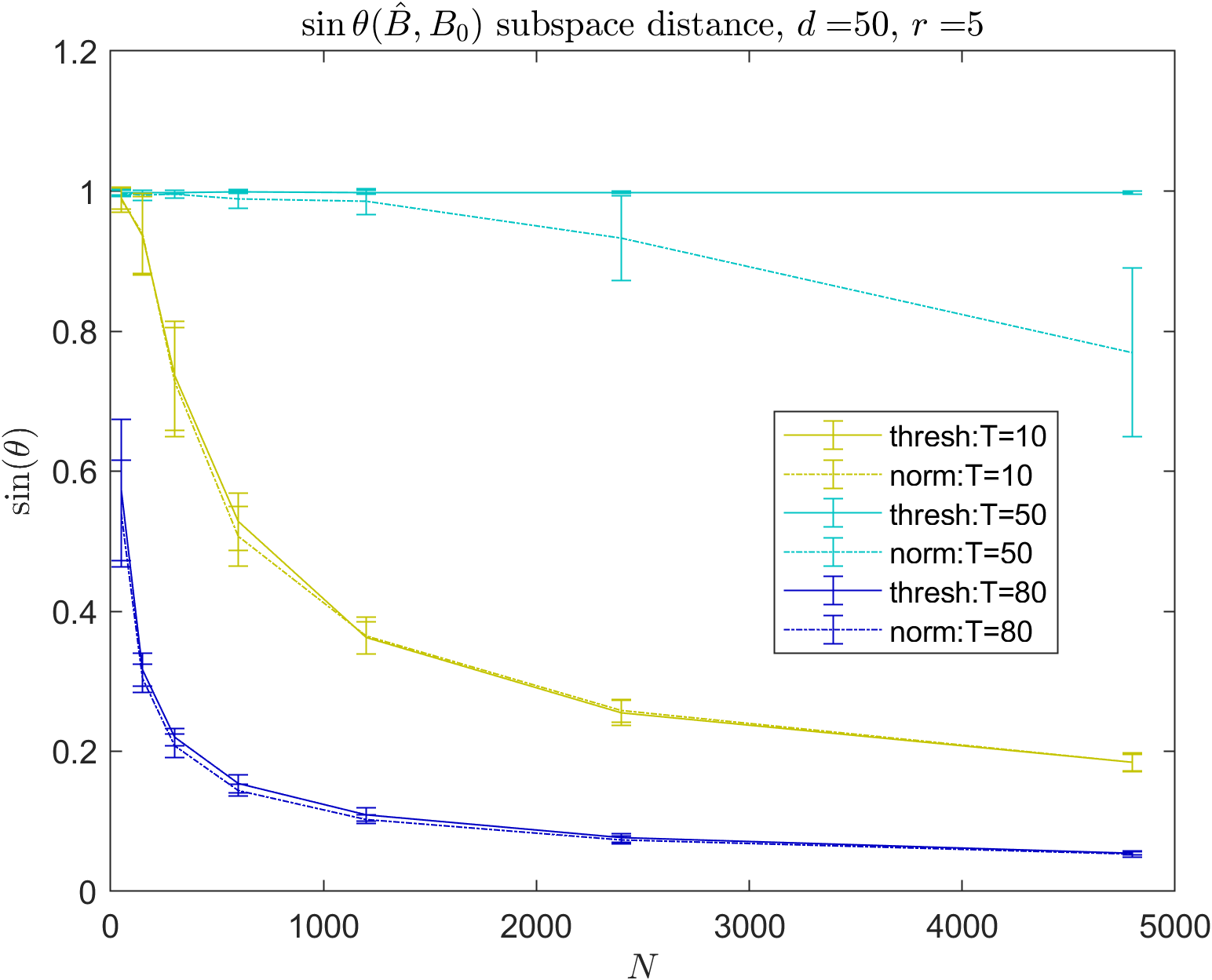}
\caption{Subspace estimation error vs.\ $N$ for estimates $\Bhat$ from (1) \texttt{thresh} and (2) \texttt{norm}. Results are shown for $d=50, r=5$, in regimes $d>T=10$, $T=d=50$, and $d<T=80$.} 
\label{fig:our_trendN}
\end{figure}

Both versions of the three-step estimator do well in both the $T>d$ and $T<d$ regime, though we note suboptimal performance when $T=d$. This arises from the fact that the pseudoinverse of $X_i$ can be ill-conditioned when $T=d$. 
As described before, we address this issue by normalizing our first step estimates to mitigate the effect of a single sample misdirecting the subspace estimator with an amplified noise term $\pX_i w_i$. For the truncating estimator, we chose an optimal threshold level for each $d$ and $T$ that trades off controlling the effect of possibly ill-conditioned pseudo-inverse-based least squares estimates with losses in effective sample size.

\begin{figure}[H]
\centering
\includegraphics[width=0.72\textwidth]{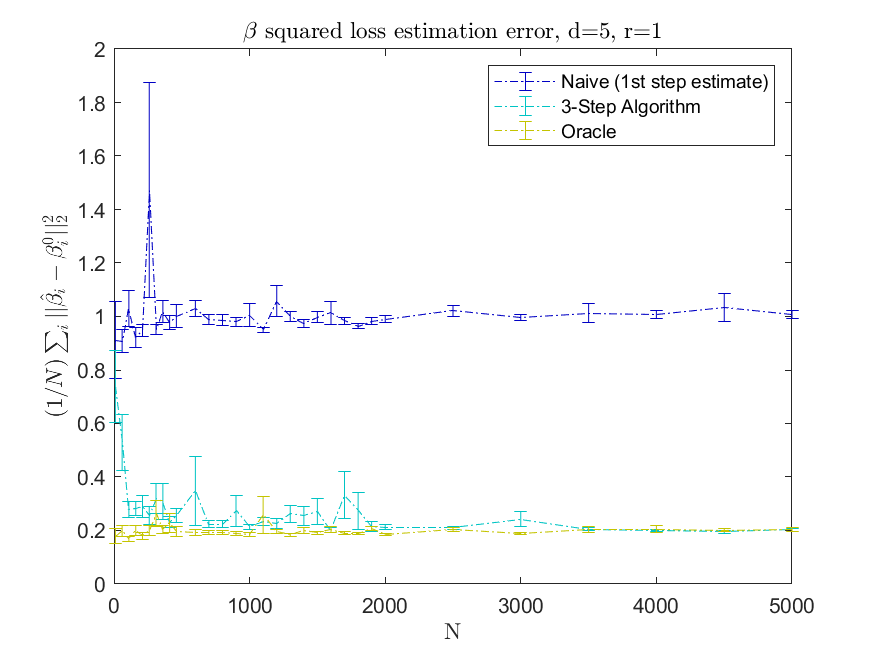}
\caption{Comparison of $\beta_i$ estimation error of \texttt{thresh} as compared to oracle and naive estimators. Here, $d = 5, r = 1, T=3$.}
\label{fig:beta_est}
\end{figure}

In Figure \ref{fig:beta_est}, we show the performance of the refined estimates $\fest_i$ from \texttt{thresh} in comparison with two benchmarks: (1) the ``oracle'' least squares estimate assuming $\Bscr_0$ is known, which is obtained as the minimizer of \eqref{eq:refinedbeta} with $\hat{B}$ replaced by the true subspace $\Bscr_0$, and
 (2) the naive least squares estimate run separately for each system $i \in [N]$, and which does not share information across systems, i.e., $\est1_i$ in \eqref{eq:firststep}. The \texttt{thresh} estimator is able to leverage information across systems to eventually match the performance of the oracle.

\subsection{Comparison with related work}\label{sec:comparison}

In \cite{tripu2021provable}, the authors present a method of moments estimator, which we refer to as \texttt{MoM}, for subspace estimation in the present setting. This estimator can be interpreted under our three-step method as first obtaining a first-step estimate $\est1_i = X_i^\T Y_i$ and then estimating the subspace shared by these first-step estimators. Specifically, the matrix $X_i^\T$ pre-multiplies $Y_i$ rather than the pseudoinverse $\pX_i$ pre-multiplying $Y_i$ as it does in the first-step least squares estimate of \cref{alg:norm}.

\begin{figure}[H]
\centering
\includegraphics[width=0.63\textwidth]{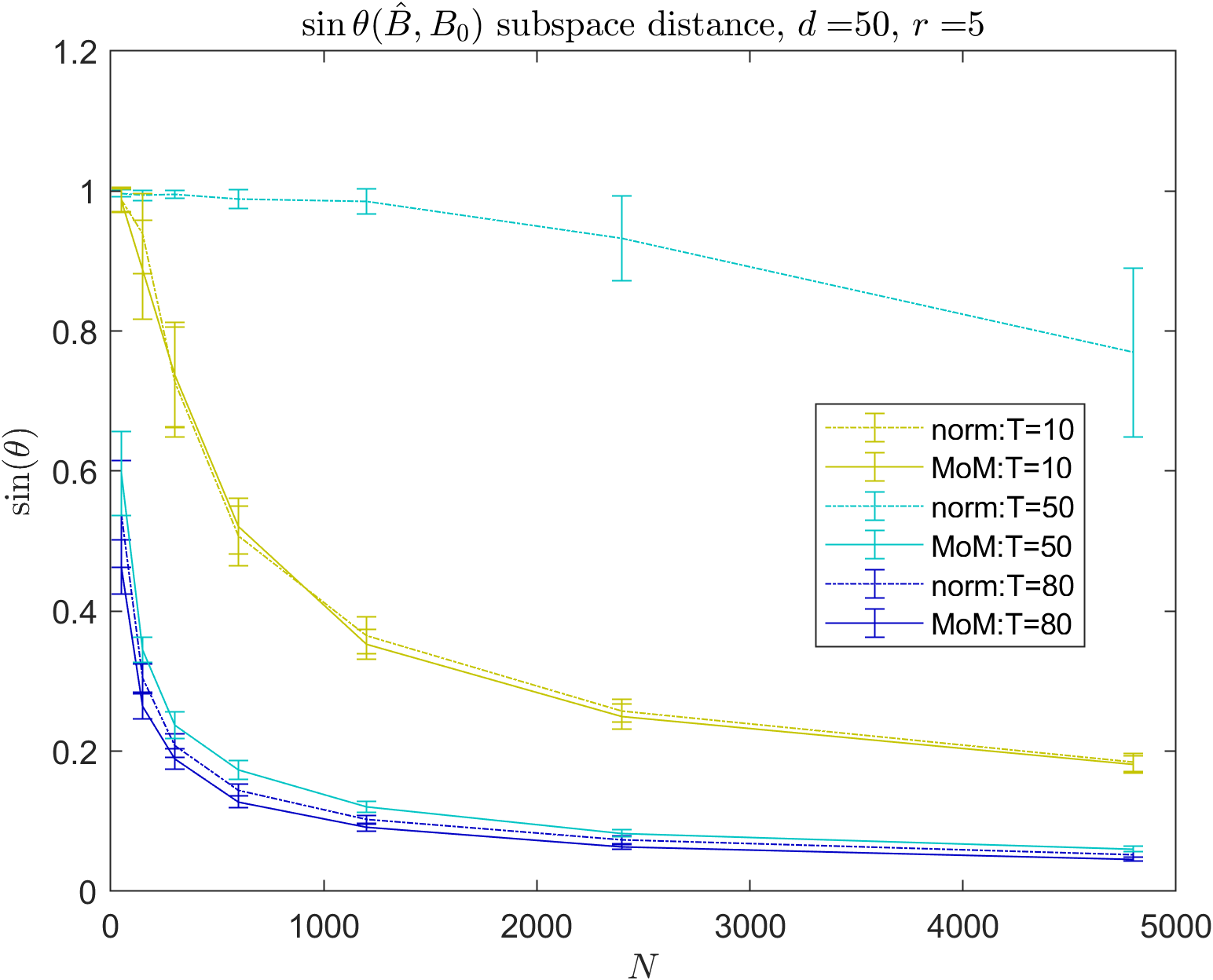}
\caption{Subspace estimation error vs.\ $N$ for estimates $\Bhat$ from (1)
\texttt{norm} and (2) \texttt{MoM}, for i.i.d.\ data. Results are shown for $d=50, r=5$, in regimes $d>T=10$, $T=d=50$, and $d<T=80$..}
\label{fig:compare_trendN}
\end{figure}

\begin{figure}[H]
\centering
\includegraphics[width=0.63\textwidth]{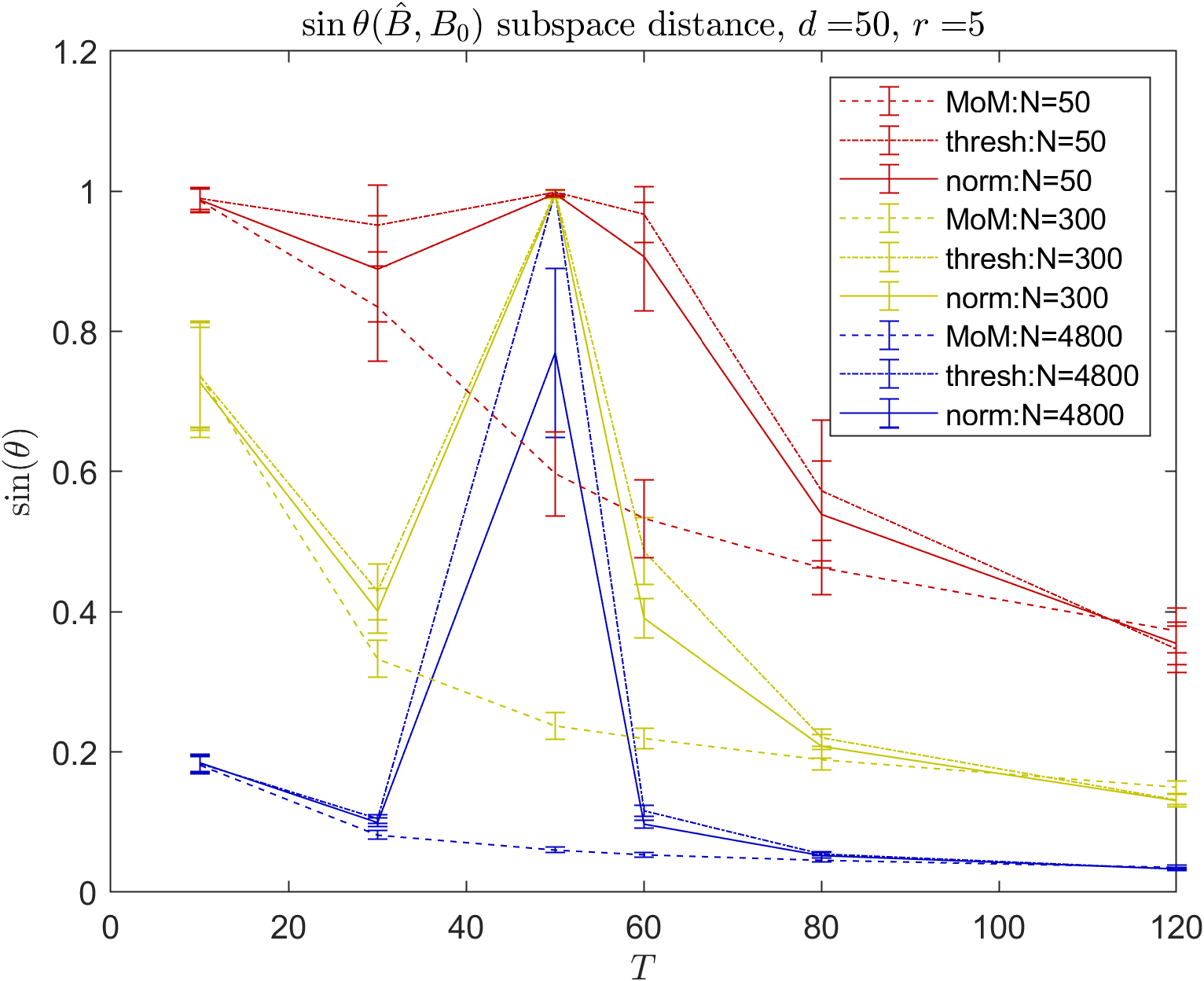}
\caption{Subspace estimation error vs.\ $T$ for estimates $\Bhat$ from (1)
\texttt{thresh}, (2) \texttt{norm},  and (3) \texttt{MoM}, for i.i.d.\ data. Results are shown for $d=50, r=5$, for various values of $N$.}
\label{fig:compare_trendT}
\end{figure}

Figure \ref{fig:compare_trendN} and \ref{fig:compare_trendT} compare the
performance of \texttt{norm} and of \texttt{MoM} in terms of subspace
estimation error. Results are shown for the regimes $T<d$, $T=d$, and $T>d$. We
see that both estimators perform comparably in the first and third regimes,
while \texttt{norm} suffers in the $T=d$ regime where $\pX$ may be
ill-conditioned. However, the next section on time series data suggests that
\texttt{norm} may generalize better to settings with dependent regressors such as time-series data.

\subsection{Time series estimation}
We next evaluate our algorithm \texttt{norm} and the method of moments estimator, \texttt{MoM} of \cite{tripu2021provable}, on time series data. Specifically, consider $T$ observations $(x_{it})_{t \in \set{0, 1, \dots, T}}$ generated as: 
\begin{align}
x_{i,t+1} = A_i x_{it} + w_{it},\ t \in [T-1],\ x_{i0} \sim \Nscr(0, \sigma_x^2) 
\end{align}
for each $i \in [N]$, with $x_{it} \in \R^d$, $A_i \in \R^{d \by d}$ and
$w_{it}\in \R^d$ a sub-Gaussian random vector in $\sg(\sigma_w^2)$. We assume each dynamics matrix $A_i$ is of rank $r\leq d$ and can be written in the form $A_i = F_i B^\T/ \opnorm{F_i B^\T}$,
where the rows of $F_i \in \R^{d\by r}$ are independently distributed and rotationally invariant, and $B \in \St(r, d)$ is an orthonormal $r$-frame for a subspace $\Bscr \in \Gr(r,d)$. Thus, the rows of $A_i$ lie in $\Bscr$, for $i \in [N]$. $F_i B^\T$ is normalized by its operator norm to ensure stability, since the operator norm dominates the magnitude of the eigenvalues. 
Unlike the setting of i.i.d.\ regression, the covariates $(x_{it})_{t \in [T]}$ are no longer independent of each other and of the collection of noise vectors $(w_{it})_{t \in [T]}$. 

As demonstrated in Figure \ref{fig:timeseries}, \texttt{norm} is able to generalize to this setting quite well when $T$ is not close to $d$, as opposed to the \texttt{MoM}, which fails to learn even as $N$ increases. Intuitively, \texttt{MoM} is not robust to the non-isotropy of the regressors $X$, while our least-squares-based first-step estimate is still able to extract useful information in this setting. 

\begin{figure}[H]
\centering
\includegraphics[width=0.63\textwidth]{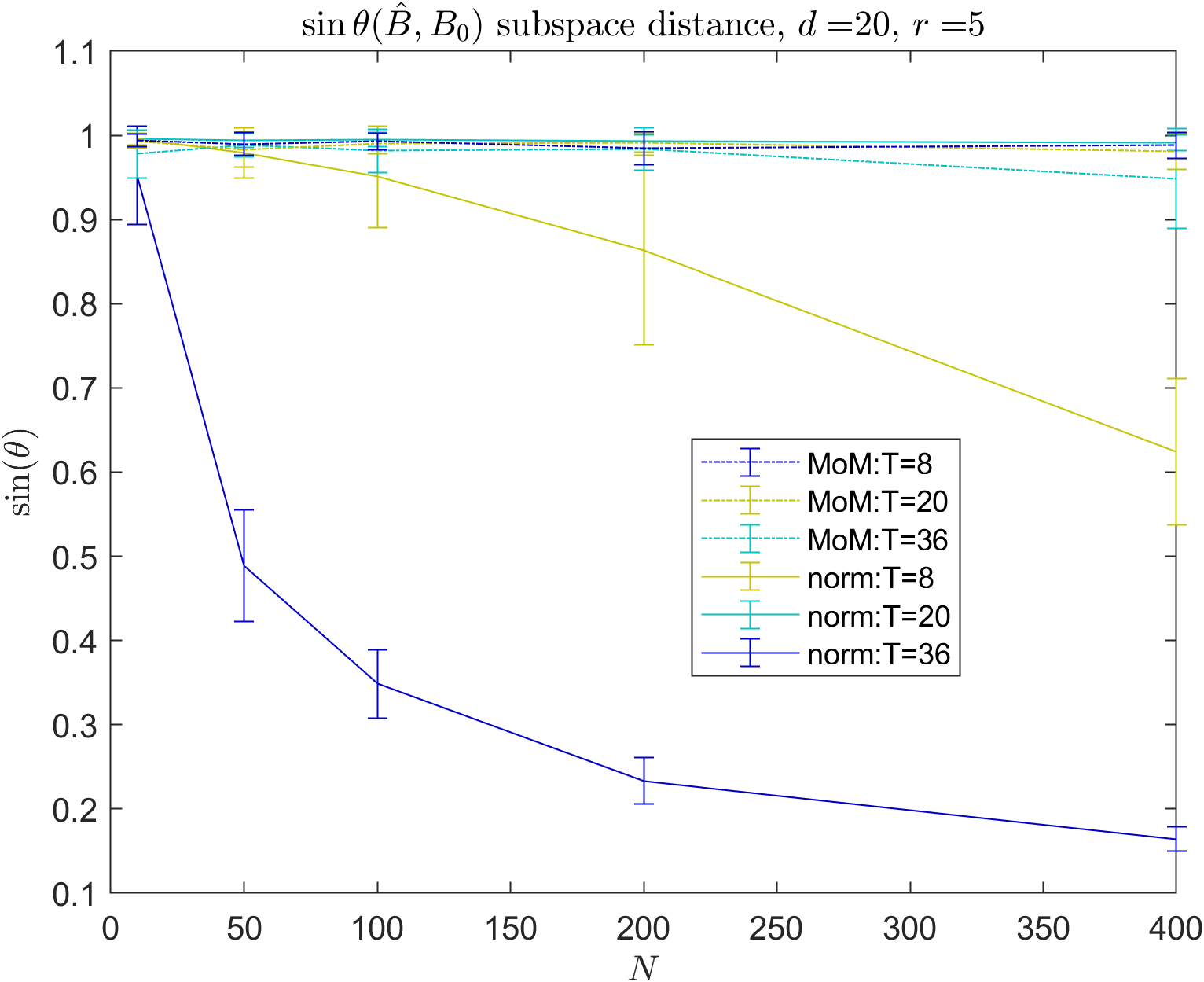}
\caption{Subspace estimation error vs.\ $N$ for estimates $\Bhat$ from (1) \texttt{norm} and (2) \texttt{MoM}, for time-series data. Results are shown for $d=20, r=5$, in regimes $d>T=8$, $T=d=20$, and $d<T=36$.}
\label{fig:timeseries}
\end{figure}

\section{Conclusion}
We have shown that when there is shared low-rank structure among systems, we can leverage data from other systems to help estimate individual parameters, even in the regime $r \leq  T < d$, in which systems would otherwise be non-identifiable from their own data alone.
We have presented a method to estimate the common low dimensional subspace as well as the system parameters, by a series of three least squares optimization problems, one of which can be solved simply by singular value decomposition. 
We then provided finite sample estimation error guarantees of a truncating variant our proposed method. These sample complexity results are not necessarily optimal, and we seek to better understand the trade-offs in the number of systems $N$, and the number of observations per system $T$ in the best achievable estimation error. However, experiments suggest that the three-step estimation procedure may be applied successfully to more general settings such as time-series estimation.

\bibliographystyle{IEEEtran}
\bibliography{regression_problem_bib}

\newpage
\appendix 

\section{Uniform distribution on the Grassmanian}\label{sec:uniform}

In this section, we collect some useful statements about the uniform
distribution on the Grassmanian. We identify $\Gr(r,d)$ with the set of
orthogonal projections of rank $T$:
\begin{align}
	\Gr(r,d) \cong \set{P\in\R^{d\by d}\given P=P^\T=P^2\land \Tr P = r}.
\end{align}

Under this identification, a matrix $P\in\R^{d\by d}$ is uniformly distributed
over $\Gr(r,d)$ iff $\P[P\in\Gr(r,d)] = 1$ and $Q^\T PQ\equalsd P$ for each
$Q\in\Oscr(d)$ \cite[Chap.~2]{Chikuse2003}. In particular, when $V$ is uniformly distributed over the
Stiefel manifold $\St(r,d)$, the projection $P_V = VV^\T$ is uniformly
distributed over $\Gr(r,d)$. The following lemma shows that we also obtain a
uniformly distributed element of $\Gr(r,d)$ by considering the projection onto
the linear span of rotationally invariant and independent vectors.

\begin{lemma}\label{lemma:uniform-g}
	Let $X\in\R^{T\by d}$ be a random matrix with $T\leq d$ and such that
	\begin{enumerate}
		\item each row of $X$ has a rotationally invariant distribution over
			$\R^d$ with absolutely continuous marginals\footnote{A necessary
				and sufficient condition for a rotationally invariant random
				vector $u\in\R^d$ to have absolutely continuous marginals is
				that $\P(u=0)=0$ \cite[Lemma 4.1.6]{Bryc1995}.},
		\item the rows of $X$ are mutually independent.
	\end{enumerate}
	Then the orthogonal projection $P_X$ onto $\rowspan(X)$ is uniformly distributed over $\Gr(T, d)$.
\end{lemma}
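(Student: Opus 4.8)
The plan is to verify the two properties that characterize the uniform distribution on $\Gr(T,d)$ as recalled just before the statement: namely (i) $P_X\in\Gr(T,d)$ almost surely, i.e.\ $\rowspan(X)$ is $T$-dimensional with probability one, and (ii) $Q^\T P_X Q\equalsd P_X$ for every $Q\in\Oscr(d)$. Once both are established, the cited characterization \cite[Chap.~2]{Chikuse2003} immediately gives that $P_X$ is uniform over $\Gr(T,d)$.

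For (i), I would first show that a single row of $X$ avoids any fixed hyperplane through the origin almost surely. A rotationally invariant random vector $u\in\R^d$ with $\P(u=0)=0$ can be written as $u=\enorm{u}\,\theta$ with $\theta$ uniform on the unit sphere $S^{d-1}$ and independent of $\enorm{u}$; since any fixed hyperplane meets $S^{d-1}$ in a set of surface measure zero, $\P(u\in H)=0$ for every fixed hyperplane $H$. (By the cited footnote, absolute continuity of the marginals of a rotationally invariant vector is equivalent to $\P(u=0)=0$, so the hypothesis applies.) I would then induct on $k$: conditionally on the first $k$ rows $x_1,\dots,x_k$, which are linearly independent with probability one by the inductive hypothesis (for $k<T\leq d$), their span lies in some hyperplane; because $x_{k+1}$ is independent of $x_1,\dots,x_k$, applying the previous paragraph conditionally shows $x_{k+1}$ lies in that span with probability zero. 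Iterating up to $k=T-1$ gives that all $T$ rows are almost surely linearly independent, hence $\rk X=T$ and $P_X\in\Gr(T,d)$ a.s.

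For (ii), fix $Q\in\Oscr(d)$. The rows of $XQ$ are $x_i^\T Q=(Q^\T x_i)^\T$, so $\rowspan(XQ)=Q^\T\rowspan(X)$ and therefore $P_{XQ}=Q^\T P_X Q$. On the other hand, each $Q^\T x_i\equalsd x_i$ by rotational invariance and the rows of $XQ$ remain mutually independent, so $XQ\equalsd X$; consequently $Q^\T P_X Q=P_{XQ}\equalsd P_X$, which is (ii). (Measurability of $X\mapsto P_X$ on the full-probability event $\{\rk X=T\}$ is standard, e.g.\ via $P_X=X^\T(XX^\T)^{-1}X$, where $XX^\T$ is invertible on that event.)

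The only genuine work is the almost-sure rank statement in step (i); step (ii) is bookkeeping with rotational invariance. The delicate point is passing from ``$u$ is rotationally invariant with $\P(u=0)=0$'' to ``$u$ puts no mass on a fixed hyperplane'' using only the absolute-continuity-of-marginals hypothesis, and then making the conditioning/induction over successive rows rigorous — in particular choosing, in a measurable way, a hyperplane containing the random span of the previously drawn rows. I expect this measurable-selection-and-conditioning argument to be the main obstacle, though it is routine.
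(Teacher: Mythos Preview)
Your proposal is correct and follows the same two-part verification as the paper: full row rank almost surely, then $\Oscr(d)$-equivariance of the law of $P_X$. Step (ii) is identical. For step (i) the paper takes a slightly different and cleaner route that neutralizes the measurable-selection worry you flag: rather than decomposing a rotationally invariant vector as $\enorm{u}\,\theta$ and arguing about surface measure on $S^{d-1}$, it picks any unit normal $u$ to a hyperplane containing $\spn(X_{-i})$ and observes that, conditionally on $X_{-i}$, rotational invariance gives $\ip{u}{X_i}\equalsd X_{i1}$, so $\P[\ip{u}{X_i}=0\mid X_{-i}]=\P[X_{i1}=0]=0$ directly from the absolute-continuity-of-marginals hypothesis. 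Because the conditional probability collapses to a constant not depending on $u$, no care is needed about how $u$ was selected, and no induction is required---a single union bound over $i\in[T]$ finishes. Your sphere-decomposition argument is equally valid but uses the hypothesis only via the footnoted equivalence $\P(u=0)=0$ and leaves the conditioning/selection step to be made precise; the paper's version trades that for a one-line appeal to the stated marginal assumption.
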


\begin{proof}
We first prove that $P_X$ is supported on $\Gr(T,d)$, or equivalently that $X$
has full row rank almost surely. Denote by $X_i$ the $i$th row of $X$ and by
$X_{-i}$ the $T-1$ remaining rows. It is sufficient to establish that
$\P[X_i\in\spn(X_{-i})]=0$ for each $i\in[N]$. Since $\spn(X_{-i})$ is a
subspace of dimension at most $d-1$, it is contained in a hyperplane of $\R^d$.
Let $u$ be a normal vector to this hyperplane with $\norm{u}_2=1$, then we have
\begin{align}
	\P[X_i\in\spn(X_{-i})]
	&\leq \P[\brk{u, X_i}=0]\\
	&= \Exp[\big]{\P{\ip{u}{X_i}=0\given X_{-i}}}\\
	&= \P[X_{i1}=0]=0.
\end{align}
The first equality is the tower rule for conditional expectations. The second
equality uses independence of the rows and the fact that by rotational
invariance the distribution of $\brk{u, X_i}$ does not depend on the unit norm
vector $u$ (and in particular is the same as $\brk{e_1, X_i}=X_{i1}$). The last
equality follows from the absolute continuity of $X_{i1}$.

Using that $P_X=\pseudo{X}X$, we have for $Q\in\Oscr(d)$
\begin{align}
	Q^\T P_X Q = Q^\T \pseudo{X}XQ = \pseudo{(XQ)} XQ \equalsd \pseudo{X}X=P_X,
\end{align}
where the penultimate equality uses that $XQ\equalsd X$ by rotational
invariance of the rows of $X$. This concludes the proof since the identity
$Q^\T P_XQ\equalsd P_X$ uniquely characterizes the uniform distribution among
distributions supported on $\Gr(T, d)$.
\end{proof}

\begin{lemma}\label{lem:1}
	Let $P$ be uniformly distributed over $\Gr(T,d)$, then $\E[P] = \frac T d
	I_d$. In particular, $\E\big[\norm{P\beta}_2^2\big]=\frac T
	d\norm{\beta}_2^2$ for each $\beta\in\R^d$, and
	$\E\big[\Fnorm{PB}^2\big]=\frac T d\Fnorm{B}^2$ for each $B\in\R^{d\by
	r}$.
\end{lemma}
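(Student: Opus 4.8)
The plan is to use rotational invariance to force $\E[P]$ to be a scalar matrix, then to pin down that scalar via its trace, and finally to deduce the two quadratic-form identities from the projection identity $P^2=P$.

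First I would observe that $M\eqdef\E[P]$ is well defined: the entries of $P$ are bounded (since $P$ is an orthogonal projection), so the expectation exists entrywise. Because the uniform law on $\Gr(T,d)$ is characterized by $Q^\T P Q\equalsd P$ for every $Q\in\Oscr(d)$, linearity of expectation gives $Q^\T M Q=\E[Q^\T P Q]=M$ for all $Q\in\Oscr(d)$; that is, $M$ commutes with the whole orthogonal group. Second, I would invoke the elementary fact that the only matrices commuting with $\Oscr(d)$ are scalar multiples of $I_d$: conjugating $M$ by the diagonal reflection that negates a single coordinate and fixes the others forces every off-diagonal entry of $M$ to equal its own negative, hence $M$ is diagonal; conjugating by a coordinate-transposition permutation matrix then equates all diagonal entries. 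So $M=cI_d$ for some $c\in\R$. Third, taking traces and using that $P$ is almost surely an orthogonal projection of rank $T$ (so $\Tr P=\rk P=T$), we get $cd=\Tr M=\E[\Tr P]=T$, whence $c=T/d$ and $\E[P]=\frac T d I_d$.

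For the two ``in particular'' statements I would use that an orthogonal projection satisfies $P^\T P=P^2=P$. Then for $\beta\in\R^d$, $\E\big[\enorm{P\beta}^2\big]=\E[\beta^\T P^\T P\beta]=\beta^\T\E[P]\beta=\frac T d\enorm{\beta}^2$, and summing this identity over the columns of $B$ gives $\E\big[\Fnorm{PB}^2\big]=\frac T d\Fnorm{B}^2$ for $B\in\R^{d\by r}$. I do not expect any genuine obstacle; the only step that is not a one-line verification is the claim that commuting with $\Oscr(d)$ implies being scalar, and even that is the standard Schur-type argument sketched above.
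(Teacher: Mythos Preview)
Your proposal is correct and follows essentially the same argument as the paper: rotational invariance forces $\E[P]$ to be scalar, the trace pins down the scalar as $T/d$, and the quadratic-form identities follow from idempotence of $P$ together with summation over columns. The only difference is that you supply an explicit sketch (reflections plus transpositions) for the fact that matrices commuting with $\Oscr(d)$ are scalar, whereas the paper simply cites this as standard.
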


\begin{proof}
	The uniform distribution over $\Gr(T,d)$ is invariant under the conjugacy
	action of $\Oscr(d)$, so the same is true for the expectation $\E[P]$. It
	is a standard fact that the only matrices that are invariant under the
	conjugacy action of $\Oscr(d)$—or equivalently, that commute with all
	matrices in $\Oscr(d)$—are the scalar matrices. Hence $\E[P] =
	\lambda I_d$ for some $\lambda\in\R$. We determine the value of $\lambda$
	by taking the trace
	\begin{align}
		\lambda d = \Tr(\lambda I_d) = \Tr\E[P] = \E[\Tr P] = T.
	\end{align}

	For the second claim
	\begin{align}
		\E\big[\norm{P\beta}_2^2\big]
		&= \E\big[\ip{P\beta}{P\beta}\big]
		= \E\big[\ip{P\beta}{\beta}\big]\\
		&= \ip{\E[P]\beta}{\beta} = \frac T d \norm{\beta}_2^2.
	\end{align}
	where the second equality uses idempotence of $P$.
	The final claim follows from the previous one by summing over the columns
	of $B$.
\end{proof}

\begin{lemma}\label{lemma:uniform-moment}
	Let $P$ be uniformly distributed over $\Gr(T,d)$. Then for all
	$P_0\in\Gr(r,d)$ 
	\begin{align}
		\E[PP_0P]= \begin{cases}\frac{T^2d+T(d-2)}{(d-1)d(d+2)}P_0
			+\frac{rT(d-T)}{(d-1)d(d+2)}I_d&\text{if $d\geq 2$}\\
			TP_0&\text{if $d=1$}
	\end{cases}.
	\end{align}
\end{lemma}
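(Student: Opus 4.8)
The plan is to use the rotational invariance of $P$ to cut the problem down to the determination of two scalars. Consider the linear operator $\Phi(M)\eqdef\E[PMP]$, which maps symmetric matrices to symmetric matrices. For any $Q\in\Oscr(d)$, writing $Q^\T\Phi(M)Q=\E[(Q^\T PQ)(Q^\T MQ)(Q^\T PQ)]$ and using $Q^\T PQ\equalsd P$ shows that $\Phi(Q^\T MQ)=Q^\T\Phi(M)Q$, i.e.\ $\Phi$ is equivariant for the conjugation action of $\Oscr(d)$ on symmetric matrices. By the classification of $\Oscr(d)$-invariant rank-$4$ tensors (equivalently, by Schur's lemma applied to the decomposition $\mathrm{Sym}^2(\R^d)=\R I_d\oplus\mathrm{Sym}^2_0(\R^d)$ into non-isomorphic irreducible components, valid for $d\geq2$), every such equivariant operator has the form $M\mapsto\alpha\,(\Tr M)\,I_d+\beta\,M$ for scalars $\alpha,\beta$ depending only on $T$ and $d$. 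The case $d=1$ is immediate since there $P\in\{0,1\}$ almost surely.

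It then remains to identify $\alpha$ and $\beta$ by evaluating $\Phi$ on two convenient inputs. Taking $M=I_d$ gives $\Phi(I_d)=\E[P^2]=\E[P]=\tfrac Td I_d$ by \cref{lem:1}, hence $\alpha d+\beta=\tfrac Td$. Taking $M=e_1e_1^\T$, the $(1,1)$ entry of $\Phi(e_1e_1^\T)=\E[(Pe_1)(Pe_1)^\T]$ equals $\E[P_{11}^2]$ with $P_{11}=e_1^\T Pe_1$, while by the formula just derived it also equals $\alpha+\beta$. To compute $\E[P_{11}^2]$, note that by rotational invariance $P_{11}\equalsd u^\T Pu$ with $u$ uniform on the unit sphere and independent of $P$, and taking $P$ to be the coordinate projection onto $\spn(e_1,\dots,e_T)$ identifies this in distribution with $(g_1^2+\cdots+g_T^2)/(g_1^2+\cdots+g_d^2)\sim\mathrm{Beta}(T/2,(d-T)/2)$ for $g\sim\Nscr(0,I_d)$, so that $\E[P_{11}^2]=\tfrac{T(T+2)}{d(d+2)}$. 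Solving the two linear equations $\alpha d+\beta=\tfrac Td$ and $\alpha+\beta=\tfrac{T(T+2)}{d(d+2)}$ yields $\alpha=\tfrac{T(d-T)}{d(d-1)(d+2)}$ and $\beta=\tfrac{T^2d+T(d-2)}{d(d-1)(d+2)}$.

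Finally, specializing to $M=P_0$ with $\Tr P_0=r$ gives $\E[PP_0P]=\beta P_0+\alpha r\,I_d$, which is precisely the claimed identity once one checks that $\alpha r=\tfrac{rT(d-T)}{(d-1)d(d+2)}$. The only step that needs more than bookkeeping is the reduction to the two-parameter family, which rests on the standard (but worth stating) fact that a conjugation-equivariant linear endomorphism of the symmetric matrices is exactly a map $M\mapsto\alpha(\Tr M)I_d+\beta M$; everything downstream is two short moment computations—one supplied by \cref{lem:1}, the other a Beta moment—and elementary algebra, and the formula can additionally be sanity-checked against the boundary cases $T\in\{0,d\}$ and $r\in\{0,d\}$.
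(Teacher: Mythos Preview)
Your proof is correct and takes a genuinely different route from the paper's. The paper reduces to $P_0=J_r$, expands $\E[PE_{ii}P]_{jk}$ in terms of entries of a uniform $V\in\St(T,d)$, and then reads off the needed fourth moments $\E[V_{il}^4]$, $\E[V_{il}^2V_{il'}^2]$, $\E[V_{il}^2V_{jl}^2]$, $\E[V_{il}V_{il'}V_{jl}V_{jl'}]$ from a Weingarten-type formula (\cite[Lemma~2.22]{meckes2019random}). You instead argue structurally: equivariance of $\Phi(M)=\E[PMP]$ under $\Oscr(d)$-conjugation plus the irreducible decomposition $\mathrm{Sym}^2(\R^d)=\R I_d\oplus\mathrm{Sym}^2_0(\R^d)$ forces $\Phi(M)=\alpha(\Tr M)I_d+\beta M$, and you pin down $\alpha,\beta$ via $\Phi(I_d)$ (from \cref{lem:1}) and $\E[P_{11}^2]$ (via the $\mathrm{Beta}(T/2,(d-T)/2)$ law of $\enorm{P_0'u}^2$ for $u$ uniform on $S^{d-1}$). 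Your approach is cleaner in that it never touches fourth moments of Stiefel entries and replaces the external citation by a Beta second moment; the paper's approach is more elementary in that it requires no representation theory, only explicit moment identities. Both land on the same $\E[PE_{ii}P]_{ii}=\tfrac{T(T+2)}{d(d+2)}$ as the key nontrivial scalar.
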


\begin{proof}
	Denote by $E_{ij}\eqdef  e_i e_j^\T$ the matrix in $\R^{d\by
	d}$ whose only non-zero entry, at $(i,j)$, equals 1. Let
	$J_r \eqdef\sum_{i=1}^r E_{ii}$ be the orthogonal projection onto the first
	$r$ canonical basis vectors of $\R^d$. Writing $P_0$ = $Q^\T J_r Q$ for
	some $Q\in\Oscr(d)$, it follows from $P\equalsd Q^\T PQ$ that
	\begin{align}
		\E[PP_0P] = Q^\T\E[PJ_rP]Q.
	\end{align}
	Hence, it is sufficient to prove the result for the matrix $J_r$. By
	linearity of $P_0\mapsto \E[PP_0P]$, we focus on computing $\E[PE_{ii}P]$
	for some $i\in[r]$. We have for indices $j,k\in[d]$
	\begin{align}
		(PE_{ii}P)_{jk} = (Pe_i)_j(Pe_i)_k= P_{ji}P_{ki}.
	\end{align}
	Furthermore, we can write $P=VV^\T$ where $V$ is uniformly distributed over
	$\St(T,d)$. Hence
	\begin{align}
		\E[PE_{ii}P]_{jk}
		= \sum_{1\leq l,l'\leq T} \E[V_{il}V_{jl}V_{il'}V_{kl'}].
	\end{align}

	We use \cite[Lemma 2.22]{meckes2019random} to compute the summand
	expectations\footnote{More generally, closed-form expressions are known for
	arbitrary monomials in entries of a uniformly random orthogonal matrix.
These can be expressed in terms of the so-called Weingarten functions
\cite[Proposition 2.2]{Collins2014}.}. If $k\neq j$, the
expectation is always zero, so we focus on the case $k=j$. If $i=j$,
	\begin{align}
		\E[PE_{ii}P]_{ii}
		&= \sum_{l=1}^T \E[V_{il}^4]
		+\sum_{l\neq l'} \E[V_{il}^2V_{il'}^2]\\
		&= \frac{3T}{d(d+2)} + \frac{T(T-1)}{d(d+2)}
		= \frac{T(T+2)}{d(d+2)}.
	\end{align}
	For $i\neq j$ we get
	\begin{align}
		\E[PE_{ii}P]_{jj}
		&= \sum_{l=1}^T \E[V_{il}^2V_{jl}^2]
		+\sum_{l\neq l'} \E[V_{il}V_{il'}V_{jl}V_{jl'}]\\
		&= \frac{T}{d(d+2)} - \frac{T(T-1)}{(d-1)d(d+2)}\\
		&= \frac{T(d-T)}{(d-1)d(d+2)}.
	\end{align}
	In summary,
	\begin{align}
		\E[PE_{ii}P]= \frac{T^2d+T(d-2)}{(d-1)d(d+2)}E_{ii}
		+\frac{T(d-T)}{(d-1)d(d+2)}I_d.
	\end{align}
	This concludes the proof after summing the previous equality for $i\in[r]$.
\end{proof}

\section{Proof of Theorem~\ref{prop:sample-complexity}}\label{app:proof}

Recall the following expression for the first-step estimate:
\begin{equation}
	\iest_i = P_{X_i}\gtruth_i + \pseudo X_i w_i = P_{X_i}B_0\phi
	+ \pseudo X_i w_i, 
\end{equation}
and the truncated first-step estimate:
\begin{equation}
	\test_i = \ind{\opnorm{\pseudo X_i}\leq s}\iest_i
\end{equation}
for some threshold $s$ that will be set at a later stage. The truncated first-step estimate provides us an upper bound on $\opnorm{\pseudo X_i}$ which will be used throughout our analysis. 

Note that since $X_i$ has full row rank almost surely, we have $\opnorm{\pseudo X_i}=1/\sigma_T(X_i)$ almost surely, where $\sigma_T(X_i)$ is the $T$th largest singular value of $X_i$. This follows immediately from the fact that the non-zero singular values of $\pseudo X_i$ are the inverse of the non-zero singular values of $X_i$. Hence we define and express the threshold probability
\begin{equation}
	p_s \eqdef \Pr{\opnorm{\pseudo X_i}\leq s} = \Pr{\sigma_T(X_i)\geq 1/s}.
\end{equation}

The next lemma shows that the truncated first step estimates are sub-Gaussian.
We will use the following standard facts about sub-Gaussian vectors.
\begin{itemize}
	\item for two independent random vectors
$X_1\in\sg_d(\sigma_1^2)$ and $X_2\in\sg_d(\sigma_2^2)$, we have
$X_1+X_2\in\sg_d(\sigma_1^2+\sigma_2^2)$.
\item for $X\in\sg_d(\sigma^2)$ and $A\in\R^{n\times d}$, we have $AX\in\sg_n(\opnorm A^2\sigma^2)$.
\item if $X\in\sg_d(\sigma^2)$, then for all $t>0$ and $\alpha\in\R^d$
	\begin{equation}
		\Pr{\ip X\alpha>t}\leq \exp\left(-\frac{t^2\enorm\alpha^2}{2\sigma^2}\right)
	\end{equation}
\end{itemize}

\begin{lemma}\label{lemma:subg}
	For all unit vectors $u\in S^{d-1}$ and $t>0$,
	\begin{equation}
	\Pr*{\abs[\big]{\ip{\test_i}{u}}>t}\leq 2e^{-t^2/L^2},
	\end{equation}
	with $L^2 = 2(\sigma_\phi^2+s^2\sigma_w^2)$.
\end{lemma}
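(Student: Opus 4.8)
The plan is to decompose $\test_i$ into its two constituent pieces and bound each separately using the sub-Gaussian facts listed above, then combine via the sum rule. Recall $\test_i=\ind{\opnorm{\pX_i}\leq s}\big(P_{X_i}B_0\phi_i+\pX_i w_i\big)$. Condition on $X_i$ throughout; since $\phi_i$ and $w_i$ are independent of $X_i$ and of each other, I can treat the matrices $P_{X_i}B_0$ and $\pX_i$ as fixed when taking expectations over $\phi_i$ and $w_i$. On the event $\set{\opnorm{\pX_i}\leq s}$ the two matrices have controlled operator norm: $\opnorm{P_{X_i}B_0}\leq \opnorm{P_{X_i}}\opnorm{B_0}=1$ (a projection has operator norm at most $1$ and $B_0\in\St(r,d)$ is an isometry), and $\opnorm{\pX_i}\leq s$ by definition of the event.

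Next I would apply the linear-image rule for sub-Gaussian vectors: $\phi_i\in\sg_r(\sigma_\phi^2)$ gives $P_{X_i}B_0\phi_i\in\sg_d(\opnorm{P_{X_i}B_0}^2\sigma_\phi^2)\subseteq\sg_d(\sigma_\phi^2)$, and $w_i\in\sg_T(\sigma_w^2)$ gives $\pX_i w_i\in\sg_d(\opnorm{\pX_i}^2\sigma_w^2)\subseteq\sg_d(s^2\sigma_w^2)$, both valid conditionally on $X_i$ lying in the truncation event. Independence of $\phi_i$ and $w_i$ then lets me add the variance proxies: conditionally, $\iest_i=P_{X_i}B_0\phi_i+\pX_i w_i\in\sg_d(\sigma_\phi^2+s^2\sigma_w^2)$. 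So for any unit $u$ and $t>0$, the tail bound $\Pr{\ip{\test_i}{u}>t\given X_i}\leq \exp\big(-t^2/(2(\sigma_\phi^2+s^2\sigma_w^2))\big)$ holds on the truncation event, and trivially $\ip{\test_i}{u}=0$ off that event so the bound holds unconditionally after integrating out $X_i$. Applying the same argument to $-u$ and a union bound over the two signs yields the two-sided bound $\Pr{\abs{\ip{\test_i}{u}}>t}\leq 2\exp\big(-t^2/(2(\sigma_\phi^2+s^2\sigma_w^2))\big)=2e^{-t^2/L^2}$ with $L^2=2(\sigma_\phi^2+s^2\sigma_w^2)$, as claimed.

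The only mild subtlety — and the place to be careful rather than the place that is genuinely hard — is the interaction between the truncation indicator and the conditioning. The clean way to handle it is to note $\ind{\opnorm{\pX_i}\leq s}$ is $\sigma(X_i)$-measurable, so conditionally on $X_i$ it is a constant (either $0$ or $1$); when it is $1$ we are in the regime where the operator-norm bounds above apply, and when it is $0$ the vector $\test_i$ is identically zero and every tail probability is zero. Taking $\E_{X_i}$ of the conditional tail bound (which is dominated by $2e^{-t^2/L^2}$ in both cases) gives the unconditional statement. No new concentration inequality is needed — this is purely an assembly of the three bulleted sub-Gaussian facts together with the deterministic operator-norm estimates $\opnorm{P_{X_i}}\leq 1$ and $\opnorm{\pX_i}\leq s$ furnished by the truncation.
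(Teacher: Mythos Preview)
Your proof is correct and follows essentially the same route as the paper's: condition on $X_i$, use the linear-image and sum rules for sub-Gaussian vectors together with the operator-norm bounds $\opnorm{P_{X_i}B_0}\leq 1$ and $\opnorm{\pX_i}\leq s$ on the truncation event, then integrate out $X_i$ and apply a union bound over $\pm u$. Your explicit treatment of the off-event case (where $\test_i=0$) is if anything slightly more careful than the paper's presentation via the indicator in the law of total expectation.
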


\begin{proof}
	Consider a unit vector $u\in S^{d-1}$ and $t>0$. By the law of total expectation
	\begin{equation}\label{eq:subg-te}
	\Pr[\big]{\ip{\test_i}{u}>t}
		= \Exp*{\ind{\opnorm{\pseudo X_i}\leq s}\Pr[\big]{\ip{\iest_i} u> t\given
		X_i}}.
	\end{equation}

	Conditioned on $X_i$, $\iest_i = P_{X_i}B_0\phi + \pseudo X_i w_i$ is the
	sum of two independent sub-Gaussian variables, with
	$P_{X_i}B_0\phi\in\sg(\opnorm{P_{X_i}B_0}^2\sigma_\phi^2)$ and  $\pseudo
	X_iw_i\in\sg(\opnorm{\pseudo X_i}^2\sigma_w^2)$.

	Since $\opnorm{P_{X_i} B_0}\leq \opnorm{P_{X_i}}\opnorm {B_0}\leq
	1$, we have $\iest_i\mid X_i\in\sg(\sigma_\phi^2+\opnorm{\pseudo
	X_i}\sigma_w^2)$. Hence
	\begin{equation}
		\Pr*{\ip{\iest_i} u> t\given X_i}\leq
		\exp\left(-\frac{t^2}{2\big(\sigma_\phi^2+\opnorm{\pseudo
		X_i}^2\sigma_w^2\big)}\right).
	\end{equation}
	Whenever $\opnorm{\pseudo X_i}\leq s$, the term on the right-hand side in the previous
	inequality is upper-bounded by $e^{-t^2/L^2}$, which implies by
\eqref{eq:subg-te} that $\Pr[\big]{\ip{\test_i}{u}>t}\leq e^{-t^2/L^2}$.
Replacing $u$ with $-u$ we obtain  $\Pr[\big]{\ip{\test_i}{u}<-t}\leq
	e^{-t^2/L^2}$ and we conclude by taking a union bound.
\end{proof}

\begin{lemma}\label{lemma:covariance}
Let $\hat \Sigma_\beta\eqdef\frac 1 N \sum_{i=1}^N \test_i{\test_i}^\T$ denote the empirical covariance matrix of the truncated first step estimates, and define its expectation $\Sigma_\beta\eqdef \E[\hat\Sigma_\beta]$. Defining $p_s = \P[\opnorm{\pseudo X_i}\leq s]$, we have $\Sigma_\beta =
\lambda P_{\Bscr_0} + \mu I_d$ where if $T\leq d$,   
	\begin{gather}
	\lambda = \sigma_\phi^2 p_s\frac{T^2d+T(d-2)}{(d-1)d(d+2)}\\
	\mu = \frac{\sigma_\phi^2 p_s rT(d-T)}{(d-1)d(d+2)}
	+ \frac{\sigma_w^2 p_s}{d} \Exp{\Fnorm{\pseudo X_i}^2\given \opnorm{\pseudo
	X_i}\leq s},
\end{gather}
and if $T>d$, we have 
	\begin{gather}
	\lambda = \sigma_\phi^2 p_s\\
	\mu = \frac{\sigma_w^2 p_s}{d} \Exp{\Fnorm{\pseudo X_i}^2\given \opnorm{\pseudo
	X_i}\leq s},
\end{gather}
In particular, $\Bscr_0$ and $\Bscr_0^\perp$ are the two eigenspaces of $\Sigma_\beta$ with spectral gap $\lambda$, and $\Bscr_0$ is the $r$th principal subspace of $\Sigma_\beta$.
\end{lemma}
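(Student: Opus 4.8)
The plan is to obtain $\Sigma_\beta$ in closed form by expanding the outer product $\test_i{\test_i}^\T$ and integrating in two stages: first conditionally on the design matrix $X_i$, then over $X_i$ using the Grassmannian moment identities of \cref{lem:1,lemma:uniform-moment}. Since the $(\test_i)_{i\in[N]}$ are identically distributed, it suffices to compute $\Sigma_\beta=\E[\test_i{\test_i}^\T]$ for a single, hence arbitrary, index $i$. Using the singular value decomposition $X_i=U_iS_iV_i^\T$, write $\pseudo{X_i}=V_iS_i^{-1}U_i^\T$ and $P_{X_i}=\pseudo{X_i}X_i=V_iV_i^\T$. Expanding $\test_i{\test_i}^\T=\ind{\opnorm{\pseudo{X_i}}\leq s}\,\iest_i{\iest_i}^\T$ with $\iest_i=P_{X_i}B_0\phi_i+\pseudo{X_i}w_i$, and using that the indicator is $\sigma(X_i)$-measurable and that $\phi_i,w_i$ are centered with covariances $\sigma_\phi^2 I_r$ and $\sigma_w^2 I_T$ and independent of each other and of $X_i$, the two cross terms vanish in conditional expectation and
\begin{equation}
	\E\bigl[\test_i{\test_i}^\T\mid X_i\bigr]=\ind{\opnorm{\pseudo{X_i}}\leq s}\Bigl(\sigma_\phi^2\,P_{X_i}P_{\Bscr_0}P_{X_i}+\sigma_w^2\,\pseudo{X_i}(\pseudo{X_i})^\T\Bigr),
\end{equation}
where $B_0B_0^\T=P_{\Bscr_0}$. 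It then remains to integrate each of the two terms over $X_i$.

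For the signal term, the key point is that the row-space projection $P_{X_i}=V_iV_i^\T$ is \emph{independent} of the truncation event $\{\opnorm{\pseudo{X_i}}\leq s\}=\{\sigma_T(X_i)\geq 1/s\}$: for a matrix with i.i.d.\ standard normal entries, the singular values $S_i$ are independent of the right singular vectors $V_i$, which are Haar-distributed on $\St(T,d)$ when $T\leq d$ and on $\Oscr(d)$ when $T>d$, while the event depends on $S_i$ alone; in particular $P_{X_i}$ is uniform on $\Gr(T,d)$, consistently with \cref{lemma:uniform-g}. Hence, when $T\leq d$,
\begin{equation}
	\sigma_\phi^2\,\E\bigl[\ind{\opnorm{\pseudo{X_i}}\leq s}\,P_{X_i}P_{\Bscr_0}P_{X_i}\bigr]=\sigma_\phi^2\, p_s\,\E\bigl[PP_{\Bscr_0}P\bigr]
\end{equation}
with $P$ uniform on $\Gr(T,d)$, and \cref{lemma:uniform-moment} evaluates the right-hand side, producing the asserted $\lambda P_{\Bscr_0}$ term together with an additional $\sigma_\phi^2 p_s\,\tfrac{rT(d-T)}{(d-1)d(d+2)}I_d$ contribution to $\mu$. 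When $T>d$, $X_i$ has full column rank almost surely, so $P_{X_i}=I_d$ and the signal term is simply $\sigma_\phi^2 p_s P_{\Bscr_0}$, giving $\lambda=\sigma_\phi^2 p_s$.

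For the noise term, the same decomposition gives $\pseudo{X_i}(\pseudo{X_i})^\T=V_iS_i^{-2}V_i^\T$. Since for deterministic $M$ the matrix $\E[V_iMV_i^\T]$ is invariant under conjugation by $\Oscr(d)$ it must be a scalar multiple of $I_d$, and taking traces (using $V_i^\T V_i=I$, exactly as in the proof of \cref{lem:1}) gives $\E[V_iMV_i^\T]=\tfrac{\Tr M}{d}I_d$; hence $\E[\pseudo{X_i}(\pseudo{X_i})^\T\mid S_i]=\tfrac1d\Fnorm{\pseudo{X_i}}^2 I_d$. Multiplying by the indicator, integrating, and using $\E[\ind{A}Z]=p_s\,\E[Z\mid A]$ with $A=\{\opnorm{\pseudo{X_i}}\leq s\}$ yields the remaining $\tfrac{\sigma_w^2 p_s}{d}\E[\Fnorm{\pseudo{X_i}}^2\mid\opnorm{\pseudo{X_i}}\leq s]\,I_d$ contribution to $\mu$. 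Summing the two contributions gives $\Sigma_\beta=\lambda P_{\Bscr_0}+\mu I_d$ with exactly the claimed $\lambda$ and $\mu$ in both regimes. Since $P_{\Bscr_0}$ acts as the identity on $\Bscr_0$ and as zero on $\Bscr_0^\perp$, the eigenvalues of $\Sigma_\beta$ are $\lambda+\mu$ on $\Bscr_0$ and $\mu$ on $\Bscr_0^\perp$; because $\sigma_\phi^2>0$, the factor $\tfrac{T^2d+T(d-2)}{(d-1)d(d+2)}$ is positive for $T\geq1,\ d\geq2$, and $p_s>0$ for the threshold of \cref{prop:sample-complexity}, we have $\lambda>0$, so $\Bscr_0$ is precisely the $r$-dimensional top eigenspace of $\Sigma_\beta$ and the spectral gap equals $\lambda$.

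The main obstacle is the independence claim underlying the signal term: one must argue carefully that truncating on the smallest singular value of $X_i$ leaves the law of its row space unchanged — still uniform on $\Gr(T,d)$ — so that the moment identity of \cref{lemma:uniform-moment} applies verbatim after conditioning on the truncation event. Everything else is bookkeeping on top of \cref{lem:1,lemma:uniform-moment}, the one subtlety being that the $T\leq d$ and $T>d$ regimes must be treated separately because $P_{X_i}$ collapses to $I_d$ in the latter.
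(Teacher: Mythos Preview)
Your proof is correct and follows essentially the same two-stage integration as the paper: condition on $X_i$ to integrate out $\phi_i$ and $w_i$, then use the Grassmannian moment identities to handle the outer expectation. The only noteworthy difference is in justifying that the truncation event leaves the law of $P_{X_i}$ uniform on $\Gr(T,d)$: you invoke the independence of singular values and singular vectors of a Gaussian matrix (a true but nontrivial fact), whereas the paper argues more directly that the conditional law of $X_i$ given $\{\sigma_T(X_i)\geq 1/s\}$ remains right $\Oscr(d)$-invariant and then reapplies \cref{lemma:uniform-g}; the paper's route is slightly more self-contained, yours slightly more explicit about the underlying structure, and both lead to the same computation via \cref{lemma:uniform-moment}.
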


\begin{proof}
	For notational convenience, we introduce the binary indicator variable
	$Z_i\eqdef \ind{\sigma_T(X_i)\geq 1/s}$, governing the truncation of the
	first step estimate. Note that $\E[\hat\Sigma_\beta]=\E[\test_i{\test_i}^\T]$ and since $w_i$ is independent of $X_i$ and $\gtruth_i$, we get
\begin{align}\label{eq:covariance-decomposition}
	\E\big[\test_i{\test_i}^\T\big]
&= \E[Z_i P_{X_i}\gtruth_i{\gtruth_i}^\T P_{X_i}]\\
&\quad+ \E\big[Z_i\pseudo X_i w_i w_i^\T {\pseudo X_i}^\T\big]
\end{align}

We compute the first expectation in \eqref{eq:covariance-decomposition} as
\begin{align}
\E[Z_i P_{X_i}\gtruth_i{\gtruth_i}^\T P_{X_i}]
&=\E[Z_i P_{X_i}B_0\phi_i\phi_i^\T B_0^\T P_{X_i}]\\
&= \sigma_\phi^2\E[Z_iP_{X_i}P_{\Bscr_0}P_{X_i}]. \label{eq:intermed1}
\end{align} 
The first equality uses the definition of $\gtruth_i$. We integrate $\phi_i$
out in the second equality using the law of total expectation and isotropy of
$\phi_i$.

If $T\leq d$, \eqref{eq:intermed1} then becomes 
\begin{align}
\sigma_\phi^2\E[Z_iP_{X_i}P_{\Bscr_0}P_{X_i}]
&=\sigma_\phi^2 p_s\Exp{P_{X_i}P_{\Bscr_0}P_{X_i}\given Z_i=1}\\
	&=\sigma_\phi^2 p_s\frac{T^2d+T(d-2)}{(d-1)d(d+2)} P_{\Bscr_0}\\
	&\quad + \frac{\sigma_\phi^2p_s rT(d-T)}{(d-1)d(d+2)} I_d.
\end{align}
 where we apply the law of total expectation again in the first step. Finally, it is easy to check that the distribution of $X_i$
conditioned on $Z_i=1$ is invariant under right multiplication by an element of
$\Oscr(d)$, and has full row rank with probability 1. This implies by
\cref{lemma:uniform-g}, that conditioned on $Z_i=1$, $P_{X_i}$ is uniformly
distributed over $\Gr(T, d)$, which allows us to apply \cref{lemma:uniform-moment}.

Meanwhile, if $T>d$, the rows of $X_i$ span $\R^d$ almost surely, so that $P_{X_i} = I_d$ with probability 1. Then \eqref{eq:intermed1} becomes 
\begin{align}
    \sigma_\phi^2 \E[Z_i P_{\Bscr_0}] = \sigma_\phi^2 p_s P_{\Bscr_0}.
\end{align}

For the second expectation in \eqref{eq:covariance-decomposition}, for any $T\geq 1$,
\begin{align}
	\E\big[Z_i\pseudo X_i w_i w_i^\T {\pseudo X_i}^\T\big]
	&= \sigma_w^2\E\big[Z_i\pseudo X_i{\pseudo X_i}^\T\big]\\
	&= \sigma_w^2 p_s\Exp[\big]{\pseudo X_i{\pseudo X_i}^\T\given Z_i=1}\\
	&= \frac{\sigma_w^2p_s}{d} \Exp[\big]{\Fnorm{\pseudo X_i}^2\given Z_i=1} I_d,
\end{align}
where we used isotropy of $w_i$ in the first equality and the law of total
expectation for the second equality. Finally, the identity
$\pseudo{(X_iQ^\T)}=Q\pseudo X_i$, valid for all $Q\in\Oscr(d)$, shows that, conditioned on $Z_i=1$, the columns of $\pseudo{X_i}$ have rotationally invariant distributions, due to the rows of $X_i$ having rotationally invariant distributions conditioned on $Z_i=1$.
\end{proof}

\begin{proof}[Proof of \cref{prop:sample-complexity}]
	We first apply a variant of the Davis–Kahan theorem (see e.g.\
	\cite[Thm~2.1]{duchi2022subspace}),
    to bound the maximum principal angle between subspaces $\hat\Bscr$ and $\Bscr_0$ as
    \begin{equation}\label{eq:sint}
		\opnorm{\sin \Theta(\hat\Bscr,\Bscr_0)}
		\leq 
		\frac{\opnorm{\hat\Sigma_\beta-\Sigma_\beta}}{\lambda},
	\end{equation}
	where we used that $\hat\Bscr$ is the $r$th principal subspace of $\hat \Sigma_\beta\eqdef\frac 1 N \sum_{i=1}^N \test_i{\test_i}^\T$ by definition of our estimator, and by \cref{lemma:covariance}, that $\Bscr_0$ is the
	$r$th principal subspace of the matrix $\Sigma_\beta\eqdef
	\E[\hat\Sigma_\beta]$, with spectral gap $\lambda$.
	Using the expression for $\lambda$ from
	\cref{lemma:covariance}, we see that $\lambda \geq \frac 1 3\sigma_\phi^2
	p_s T^2/d^2$ when $T\leq d$ and since $\lambda=\sigma_\phi^2 p_s$ when
	$T>d$, we have
	\begin{equation}\label{eq:spectral-gap}
		\frac 1 \lambda \lesssim \frac 1 {\sigma_\phi^2
		p_s}\max\lrp*{\frac{d^2}{T^2},1}.
	\end{equation}

	We then apply \cite[Prop.~2.1]{Roman12} which provides a tail bound for the empirical covariance matrix of independent sub-Gaussian random vectors. By \cref{lemma:subg}, the truncated estimate $\test_i$ satisfies a sub-Gaussian tail bound with variance proxy $L=\sigma_\phi^2+s^2\sigma_w^2$, hence
	\begin{equation}\label{eq:cov-concentration}
		\opnorm{\hat\Sigma_\beta-\Sigma_\beta}
		\leq L \sqrt{\frac d N\log\frac 2\delta}
		= (\sigma_\phi^2+s^2\sigma_\omega^2)
		\sqrt{\frac d N\log\frac 2\delta},
	\end{equation}
	for all $0<\delta<1$ and with probability at least $1-\delta$.

Plugging \eqref{eq:spectral-gap} and \eqref{eq:cov-concentration} into
	\eqref{eq:sint}, we obtain that for every $0<\delta<1$
	\begin{equation}
		\opnorm{\sin \Theta(\hat\Bscr,\Bscr_0)}
		\lesssim \frac{1+s^2\sigma_w^2/\sigma_\phi^2}{p_s}
	\max \lrp*{ \frac{d^2}{T^2}, 1}
		\sqrt{\frac d
		N\log\frac 2\delta},
	\end{equation}
	with probability at least $1-\delta$.

Finally, we set $s$ based on the following lower bound on $p_s$ provided by \cite[Thm~1.1]{rudelson2009smallest}. When $T\leq d$,
\begin{equation}\label{eq:rude}
	p_s\geq 1- \left(\frac{C}{s(\sqrt{d}-\sqrt{T-1})}\right)^{d-T+1}-e^{-cd}
\end{equation}
for some universal positive constants $C$ and $c$. For $1/s = (1-e^{-c})(\sqrt d - \sqrt {T-1})/(2C)$, we get
\begin{align}
	p_s &\geq 1 - \left(\frac{1-e^{-c}}{2}\right)^{d-T+1}- e^{-cd}\\
		&\geq 1 - \left(\frac{1-e^{-c}}{2}\right)- e^{-c} = \frac{1-e^{-c}} 2.
\end{align}
For this setting of $s$ when $T\leq d$, we obtain with probability $1-\delta$,
	\begin{equation}
		\opnorm{\sin \Theta(\hat\Bscr,\Bscr_0)}
		\lesssim \left(1+\frac{\sigma_w^2/\sigma_\phi^2}{(\sqrt d - \sqrt
		{T-1})^2}\right)
	\frac{d^2}{T^2}
		\sqrt{\frac d N\log\frac 2\delta}.
	\end{equation}

 When $T>d$, the bound \eqref{eq:rude} holds with the roles of $T$ and $d$ swapped. Repeating the setting of $s$ above with $T$ and $d$ swapped, we have that when $T>d$, with probability at least $1-\delta$,
	\begin{equation}
		\opnorm{\sin \Theta(\hat\Bscr,\Bscr_0)}
		\lesssim \left(1+\frac{\sigma_w^2/\sigma_\phi^2}{(\sqrt T - \sqrt {d-1})^2}\right)
		\sqrt{\frac d N\log\frac 2\delta}.
	\end{equation} 
\end{proof}

\end{document}